\theoremstyle{plain}
\newcounter{theoremcounter}
\newtheorem{theorem}[theoremcounter]{Theorem}
\newtheorem{lemma}[theoremcounter]{Lemma}
\theoremstyle{definition}
\newcommand{\argmax}{\operatornamewithlimits{argmax}}
\newcommand{\argmin}{\operatornamewithlimits{argmin}}
\newcommand{\cev}[1]{\reflectbox{\ensuremath{\vec{\reflectbox{\ensuremath{#1}}}}}}
\title{Softmax Q-Distribution Estimation for Structured Prediction: A Theoretical Interpretation for RAML}
\author{
  Xuezhe Ma \& Pengcheng Yin \& Jingzhou Liu \& Graham Neubig \& Eduard Hovy \\
  Language Technologies Institute \\
  Carnegie Mellon University \\
  {\tt \{xuezhem, pcyin, liujingzhou, gneubig, hovy\}@cs.cmu.edu}
}
\begin{document}
\maketitle

\begin{abstract}

Reward augmented maximum likelihood (RAML), a simple and effective learning framework to directly optimize towards the reward function in structured prediction tasks, has led to a number of impressive empirical successes. 
RAML incorporates task-specific reward by performing maximum-likelihood updates on candidate outputs sampled according to an exponentiated payoff distribution, which gives higher probabilities to candidates that are close to the reference output. 
While RAML is notable for its simplicity, efficiency, and its impressive empirical successes, the theoretical properties of RAML, especially the behavior of the exponentiated payoff distribution, has not been examined thoroughly.
In this work, we introduce softmax Q-distribution estimation, a novel theoretical interpretation of RAML, which reveals the relation between RAML and Bayesian decision theory.
The softmax Q-distribution can be regarded as a smooth approximation of the Bayes decision boundary, and the Bayes decision rule is achieved by decoding with this Q-distribution. 
We further show that RAML is equivalent to approximately estimating the softmax Q-distribution, with the temperature $\tau$ controlling approximation error.
We perform two experiments, one on synthetic data of multi-class classification and one on real data of image captioning, to demonstrate the relationship between RAML and the proposed softmax Q-distribution estimation method, verifying our theoretical analysis.
Additional experiments on three structured prediction tasks with rewards defined on sequential (named entity recognition), tree-based (dependency parsing) and irregular (machine translation) structures show notable improvements over maximum likelihood baselines.
\end{abstract}

\section{Introduction}

Many problems in machine learning involve structured prediction, i.e., predicting a group of outputs that depend on each other.
Recent advances in sequence labeling~\citep{ma-hovy:2016:P16-1}, syntactic parsing~\citep{McDonald:2005} and machine translation~\citep{bahdanau2014neural} benefit from the development of more sophisticated discriminative models for structured outputs, such as the seminal work on conditional random fields (CRFs)~\citep{lafferty2001conditional} and large margin methods~\citep{roller2004max}, demonstrating the importance of the joint predictions across multiple output components.

A principal problem in structured prediction is direct optimization towards the task-specific metrics (i.e., rewards) used in evaluation, such as token-level accuracy for sequence labeling or BLEU score for machine translation.
In contrast to maximum likelihood (ML) estimation which uses likelihood to serve as a reasonable surrogate for the task-specific metric, a number of techniques~\citep{roller2004max,gimpel-smith:2010:NAACLHLT,volkovs2011loss,shen-EtAl:2016:P16-1} have emerged  to incorporate task-specific rewards in optimization.
Among these methods, reward augmented maximum likelihood (RAML)~\citep{norouzi2016reward} has stood out for its simplicity and effectiveness, leading to state-of-the-art performance on several structured prediction tasks, such as machine translation~\citep{wu2016google}
and image captioning~\citep{LiuZYG016}.
Instead of only maximizing the log-likelihood of the ground-truth output as in ML, RAML attempts to maximize the expected log-likelihood of all possible candidate outputs w.r.t.~the \emph{exponentiated payoff distribution}, which is defined as the normalized exponentiated reward. 
By incorporating task-specific reward into the payoff distribution, RAML combines the computational efficiency of ML with the conceptual advantages of reinforcement learning (RL) algorithms that optimize the expected reward~\citep{ranzato15mixer,bahdanau2016actor}.
Simple as RAML appears to be, its empirical success has piqued interest in analyzing and justifying RAML from both theoretical and empirical perspectives.
In their pioneering work, \citet{norouzi2016reward} showed that both RAML and RL optimize the KL divergence between the exponentiated payoff distribution and model distribution, but in opposite directions. Moreover, when applied to log-linear model, RAML can also be shown to be equivalent to the softmax-margin training method~\citep{gimpel-smith:2010:NAACLHLT,gimpel-12c}.
\citet{nachum2016improving} applied the payoff distribution to improve the exploration properties of policy gradient for model-free reinforcement learning.

Despite these efforts, the theoretical properties of RAML, especially the interpretation and behavior of the exponentiated payoff distribution, have largely remained under-studied (\S\ref{sub:background}).
First, RAML attempts to match the model distribution with the heuristically designed exponentiated payoff distribution whose behavior has largely remained under-appreciated, resulting in a non-intuitive asymptotic property. 
Second, there is no direct theoretical proof showing that RAML can deliver a prediction function better than ML.
Third, no attempt (to our best knowledge) has been made to further improve RAML from the algorithmic and practical perspectives.

In this paper, we attempt to resolve the above-mentioned under-studied problems by providing an theoretical interpretation of RAML. 
Our contributions are three-fold:
(1) Theoretically, we introduce the framework of \textit{softmax Q-distribution estimation}, through which we are able to interpret the role the payoff distribution plays in RAML (\S\ref{sec:softmax:Q}).
Specifically, the softmax Q-distribution serves as a smooth approximation to the Bayes decision boundary. 
By comparing the payoff distribution with this softmax Q-distribution, we show that RAML approximately estimates the softmax Q-distribution, therefore approximating the Bayes decision rule.
Hence, our theoretical results provide an explanation of what distribution RAML asymptotically models, and why the prediction function provided by RAML outperforms the one provided by ML.
(2) Algorithmically, we further propose softmax Q-distribution maximum likelihood (SQDML) which improves RAML by achieving the exact Bayes decision boundary asymptotically.
(3) Experimentally, through one experiment using synthetic data on multi-class classification and one using real data on image captioning, we verify our theoretical analysis, showing that SQDML is consistently as good or better than RAML on the task-specific metrics we desire to optimize. Additionally, through three structured prediction tasks in natural language processing (NLP) with rewards defined on sequential (named entity recognition), tree-based (dependency parsing) and complex irregular structures (machine translation), we deepen the empirical analysis of~\citet{norouzi2016reward}, showing that RAML consistently leads to improved performance over ML on task-specific metrics, while ML yields better exact match accuracy~(\S\ref{sub:experiment}). 


\section{Background}\label{sub:background}

\subsection{Notations}\label{subsec:notation}
Throughout we use uppercase letters for random variables (and occasionally for matrices as well), and lowercase letters for realizations of the corresponding random variables. 
Let $X \in \mathcal{X}$ be the input, and $Y \in \mathcal{Y}$ be the desired structured output, e.g., in machine translation $X$ and $Y$ are French and English sentences, resp.
We assume that the set of all possible outputs $\mathcal{Y}$ is finite. For instance, in machine translation all English sentences are up to a maximum length. 
$r(y, y^*)$ denotes the task-specific reward function (e.g., BLEU score) which evaluates a predicted output $y$ against the ground-truth $y^*$.

Let $P$ denote the true distribution of the data, i.e., $(X, Y) \sim P$, and $D = \{(x_i, y_i)\}_{i=1}^n$ be our training samples, where $\{ x_i, i=1,\ldots, n \}$ (resp.~$y_i$) 
are usually i.i.d. samples of $X$ (resp.~$Y$).
Let $\mathcal{P} = \{P_\theta : \theta \in \Theta\}$ denote a parametric statistical model indexed by parameter $\theta \in \Theta$, where $\Theta$ is the parameter space. 
Some widely used parametric models are conditional log-linear models~\citep{lafferty2001conditional} and deep neural networks~\citep{NIPS2014_5346} (details in Appendix~\ref{appendix:model}).
Once the parametric statistical model is learned, given an input $x$, model inference (a.k.a.~decoding) is performed by finding an output $y^*$ achieving the highest conditional probability:
\begin{equation}\label{eq:decode}
y^* = \argmax\limits_{y \in \mathcal{Y}} P_{\hat{\theta}}(y|x)
\end{equation}
where $\hat{\theta}$ is the set of parameters learned on training data $D$.

\subsection{Maximum Likelihood}\label{subsec:mle}

Maximum likelihood minimizes the negative log-likelihood of the parameters given training data:
\begin{equation}\label{eq:mle}
\hat{\theta}_{\textrm{ML}} = \argmin\limits_{\theta \in \Theta} \sum\limits_{i=1}^{n} -\log P_{\theta}(y_i|x_i) = \argmin\limits_{\theta \in \Theta} \mathbb{E}_{\tilde{P}(X)}[\textrm{KL}(\tilde{P}(\cdot|X) || P_{\theta}(\cdot|X))]
\end{equation}
where $\tilde{P}(X)$ and $\tilde{P}(\cdot|X)$ is derived from the empirical distribution of training data $D$:
\begin{equation}\label{eq:empirical}
\tilde{P}(X=x, Y=y) = \frac{1}{n} \sum\limits_{i=1}^{n} \mathbb{I}(x_i = x, y_i = y)
\end{equation}
and $\mathbb{I}(\cdot)$ is the indicator function.
From \eqref{eq:mle}, ML attempts to learn a conditional model distribution $P_{\hat{\theta}_\textrm{ML}}(\cdot|X=x)$ that is as close to the conditional empirical distribution $\tilde{P}(\cdot|X=x)$ as possible, for each $x \in \mathcal{X}$. 
Theoretically, under certain regularity conditions~\citep{wasserman2013all}, asymptotically as $n \to \infty$, $P_{\hat{\theta}_\textrm{ML}}(\cdot|X=x)$ converges to the true distribution $P(\cdot|X=x)$, since $\tilde{P}(\cdot|X=x)$ converges to $P(\cdot|X=x)$ for each $x \in \mathcal{X}$. 

\subsection{Reward Augmented Maximum Likelihood}\label{subsec:raml}
As proposed in \citet{norouzi2016reward}, RAML incorporates task-specific rewards by re-weighting the log-likelihood of each possible candidate output proportionally to its exponentiated scaled reward:
\begin{equation}\label{eq:raml}
\hat{\theta}_{\textrm{RAML}} = \argmin\limits_{\theta \in \Theta} \sum\limits_{i=1}^{n} \Big\{-\sum\limits_{y \in \mathcal{Y}} q(y|y_i; \tau) \log P_{\theta}(y|x_i) \Big\}
\end{equation}
where the reward information is encoded by the \emph{exponentiated payoff distribution} with the temperature $\tau$ controlling it smoothness
\begin{equation}\label{eq:payoff}
q(y|y^*; \tau) = \frac{\exp (r(y, y^*)/\tau)}{\sum\limits_{y'\in \mathcal{Y}} \exp (r(y', y^*)/\tau)} = \frac{\exp (r(y, y^*)/\tau)}{Z(y^*; \tau)}
\end{equation}
\citet{norouzi2016reward} showed that \eqref{eq:raml} can be re-expressed in terms of KL divergence as follows:
\begin{equation}\label{eq:raml:kl}
\hat{\theta}_{\textrm{RAML}} = \argmin\limits_{\theta \in \Theta} \mathbb{E}_{\tilde{P}(X, Y)}[\textrm{KL}(q(\cdot|Y; \tau) || P_{\theta}(\cdot|X))]
\end{equation}
where $\tilde{P}$ is the empirical distribution in~\eqref{eq:empirical}.
As discussed in \citet{norouzi2016reward}, the globally optimal solution of RAML is achieved when the learned model distribution matches the exponentiated payoff distribution, i.e., $P_{\hat{\theta}_\textrm{RAML}}(\cdot|X=x) = q(\cdot|Y=y; \tau)$ for each $(x, y) \in D$ and for some fixed value of $\tau$.

\paragraph{Open Problems in RAML} 
We identify three open issues in the theoretical interpretation of RAML: 
i) Though both $P_{\hat{\theta}_\textrm{RAML}}(\cdot|X=x)$ and $q(\cdot|Y=y; \tau)$ are distributions defined over the output space $\mathcal{Y}$, the former is conditioned on the input $X$ while the latter is conditioned on the output $Y$ which appears to serve as ground-truth but is sampled from data distribution $P$. This makes the behavior of RAML attempting to match them unintuitive; 
ii) Supposing that in the training data there exist two training instances with the same input but different outputs, i.e., $(x, y), (x, y') \in D$. Then $P_{\hat{\theta}_\textrm{RAML}}(\cdot|X=x)$ has two ``targets'' $q(\cdot|Y=y; \tau)$ and $q(\cdot|Y=y'; \tau)$, making it unclear what distribution $P_{\hat{\theta}_\textrm{RAML}}(\cdot|X=x)$ asymptotically converges to. 
iii) There is no rigorous theoretical evidence showing that generating from $P_{\hat{\theta}_\textrm{RAML}}(y|x)$ yields a better prediction function than generating from $P_{\hat{\theta}_\textrm{ML}}(y|x)$.

To our best knowledge, no attempt has been made to \emph{theoretically} address these problems. 
The main goal of this work is to theoretically analyze the properties of RAML, in hope that we may eventually better understand it by answering these questions and further improve it by proposing new training framework.
To this end, in the next section we introduce a softmax Q-distribution estimation framework, facilitating our later analysis.

\section{Softmax Q-Distribution Estimation}\label{sec:softmax:Q}

With the end goal of theoretically interpreting RAML in mind, in this section we present the softmax Q-distribution estimation framework. 
We first provide background on Bayesian decision theory (\S\ref{subsec:bayes}) and softmax approximation of deterministic distributions (\S\ref{subsec:softmax}). 
Then, we propose the softmax Q-distribution (\S\ref{subsec:softmax:Q}), and establish the framework of estimating the softmax Q-distribution from training data, called \emph{softmax Q-distribution maximum likelihood} (SQDML, \S\ref{subsec:sqdml}). In~\S\ref{subsec:analysis}, we analyze SQDML, which is central in linking RAML and softmax Q-distribution estimation.

\subsection{Bayesian Decision Theory}\label{subsec:bayes}
Bayesian decision theory is a fundamental statistical approach to the problem of pattern classification, which quantifies the trade-offs between various classification decisions using the probabilities and rewards (losses) that accompany such decisions.

Based on the notations setup in \S\ref{subsec:notation}, let $\mathcal{H}$ denote all the possible prediction functions from input to output space, i.e., $\mathcal{H} = \{h: \mathcal{X} \to \mathcal{Y} \}$. Then, the \emph{expected reward} of a prediction function $h$ is:
\begin{equation}
R(h) = \mathbb{E}_{P(X, Y)}[r(h(X), Y)]
\end{equation}
where $r(\cdot, \cdot)$ is the reward function accompanied with the structured prediction task.

Bayesian decision theory states that the global maximum of $R(h)$, i.e., the optimal expected prediction reward is achieved when the prediction function is the so-called \emph{Bayes decision rule}:
\begin{equation}\label{eq:bayes:rule}
h^*(x) = \argmax\limits_{y \in \mathcal{Y}} \mathbb{E}_{P(Y|X=x)}[r(y, Y)] = \argmax\limits_{y \in \mathcal{Y}} R(y|x)
\end{equation}
where $R(y|x) = \mathbb{E}_{P(Y|X=x)}[r(y, Y)]$ is called the \emph{conditional reward}. 
Thus,  the Bayes decision rule states that to maximize the overall reward, compute the conditional reward for each output $y \in \mathcal{Y}$ and then select the output $y$ for which $R(y|x)$ is maximized.

Importantly, when the reward function is the indicator function, i.e., $\mathbb{I}(y = y')$, the Bayes decision rule reduces to a specific instantiation called the \emph{Bayes classifier}:
\begin{equation}\label{eq:bayes:classifier}
h^c(x) = \argmax\limits_{y\in\mathcal{Y}} P(y|X = x)
\end{equation}
where $P(Y|X=x)$ is the true conditional distribution of data defined in \S\ref{subsec:notation}.

In \S\ref{subsec:mle}, we see that ML attempts to learn the true distribution $P$. Thus, in the optimal case, decoding from the distribution learned with ML, i.e., $P_{\hat{\theta}_\textrm{ML}}(Y|X=x)$, produces the Bayes classifier $h^c(x)$, but not the more general Bayes decision rule $h^*(x)$. 
In the rest of this section, we derive a theoretical proof showing that decoding from the distribution learned with RAML, i.e., $P_{\hat{\theta}_\textrm{RAML}}(Y|X=x)$ approximately achieves $h^*(x)$, illustrating why RAML yields a prediction function with improved performance towards the optimized reward function $r(\cdot, \cdot)$ over ML.

\subsection{Softmax Approximation of Deterministic Distributions}\label{subsec:softmax}

Aimed at providing a smooth approximation of the Bayes decision boundary determined by the Bayes decision rule in \eqref{eq:bayes:rule}, we first describe a widely used approximation of deterministic distributions using the softmax function.

Let $\mathcal{F} = \{f_k: k \in \mathcal{K}\}$ denote a class of functions, where $f_k: \mathcal{X} \to \mathbb{R}, \forall k \in \mathcal{K}$. 
We assume that $\mathcal{K}$ is finite.
Then, we define the random variable $Z=\argmax_{k\in \mathcal{K}} f_k(X)$ where $X\in\mathcal{X}$ is our input random variable.
Obviously, Z is deterministic when X is given, i.e.,
\begin{equation}\label{eq:det}
P(Z=z|X=x) = \left\{\begin{array}{ll} 
1, & \textrm{if } z = \argmax\limits_{k\in \mathcal{K}} f_k(x) \\
0, & \textrm{otherwise.}
\end{array}\right.
\end{equation}
for each $z\in\mathcal{K}$ and $x\in\mathcal{X}$.

The softmax function provides a smooth approximation of the point distribution in \eqref{eq:det}, with a temperature parameter, $\tau > 0$, serving as a hyper-parameter that controls the smoothness of the approximating distribution around the target one:
\begin{equation}\label{eq:softmax}
Q(Z=z|X=x; \tau) = \frac{\exp (f_z(x)/\tau)}{\sum\limits_{k\in \mathcal{K}} \exp (f_k(x)/\tau)}
\end{equation}
It should be noted that at $\tau \to 0$, the distribution $Q$ reduces to the original deterministic distribution $P$ in \eqref{eq:det}, and in the limit as $\tau \to \infty$, $Q$ is equivalent to the uniform distribution $\mathrm{Unif}(\mathcal{K})$.

\subsection{Softmax Q-distribution}\label{subsec:softmax:Q}
We are now ready to propose the \emph{softmax Q-distribution}, which is central in revealing the relationship between RAML and Bayes decision rule.
We first define random variable $Z=h^*(X) = \argmax_{y \in \mathcal{Y}} \mathrm{E}_{P(Y|X)}[r(y, Y)]$. 
Then, $Z$ is deterministic given $X$, and according to~\eqref{eq:softmax}, we define the softmax Q-distribution to approximate the conditional distribution of $Z$ given $X$:
\begin{equation}\label{eq:softmax-q}
Q(Z=z|X=x; \tau) = \frac{\exp \left( \mathbb{E}_{P(Y|X=x)}[r(z, Y)]/\tau\right)}{\sum\limits_{y\in \mathcal{Y}} \exp \left( \mathbb{E}_{P(Y|X=x)}[r(y, Y)]/\tau\right)}
\end{equation}
for each $x\in \mathcal{X}$ and $z \in \mathcal{Y}$.%
\footnote{In the following derivations we omit $\tau$ in $Q(Z|X; \tau)$ for simplicity when there is no ambiguity.}
Importantly, one can verify that decoding from the softmax Q-distribution provides us with the Bayes decision rule,
\begin{equation}\label{eq:q:bayes}
h(x) = \argmax\limits_{y\in \mathcal{Y}} Q(y|x; \tau) = \argmax\limits_{y \in \mathcal{Y}} \mathbb{E}_{P(Y|X=x)}[r(y, Y)] = h^*(x)
\end{equation}
with any value of $\tau >0$.

\subsection{Softmax Q-distribution Maximum Likelihood}\label{subsec:sqdml}
Because making predictions according to the softmax Q-distribution is equivalent to the Bayes decision rule,
we would like to construct a (parametric) statistical model $\mathcal{P}$ to directly model the softmax Q-distribution in \eqref{eq:softmax-q}, similarly to how ML models the true data distribution $P$. We call this framework \emph{softmax Q-distribution maximum likelihood (SQDML)}. This framework is model-agnostic, so any probabilistic model used in ML such as conditional log-linear models and deep neural networks, can be directly applied to modeling the softmax Q-distribution. 

Suppose that we use a parametric statistical model $\mathcal{P} = \{P_\theta : \theta \in \Theta \}$ to model the softmax Q-distribution.
In order to learn ``optimal'' parameters $\theta$ from training data $D = \{(x_i, y_i)\}_{i=1}^n$, an intuitive and well-motivated objective function is the KL-divergence between the empirical conditional distribution of $Q(\cdot|X)$, denoted as $\tilde{Q}(\cdot|X)$, and the model distribution $P_\theta(\cdot|X)$:
\begin{equation}\label{eq:sqdml:obj}
\hat{\theta}_\textrm{SQDML} = \argmin\limits_{\theta \in \Theta} \mathbb{E}_{\tilde{Q}(X)}[\textrm{KL}(\tilde{Q}(\cdot|X) || P_{\theta}(\cdot|X))]
\end{equation}
We can directly set $\tilde{Q}(X) = \tilde{P}(X)$, which leaves the problem of defining the empirical conditional distribution $\tilde{Q}(Z|X)$. 
Before defining $\tilde{Q}(Z|X)$, we first note that if the defined empirical distribution $\tilde{Q}(X, Z)$ asymptotically converges to the true Q-distribution $Q(X,Z)$, the learned model distribution $P_{\hat{\theta}_\textrm{SQDML}}(\cdot|X=x)$ converges to $Q(\cdot|X=x)$. 
Therefore, decoding from $P_{\hat{\theta}_\textrm{SQDML}}(\cdot|X=x)$ ideally achieves the Bayes decision rule $h^*(x)$. 

A straightforward way to define $\tilde{Q}(Z|X=x)$ is to use the empirical distribution $\tilde{P}(Y|X=x)$:
\begin{equation}\label{eq:Q:empirical}
\tilde{Q}(Z=z|X=x) = \frac{\exp \left( \mathbb{E}_{\tilde{P}(Y|X=x)}[r(z, Y)]/\tau\right)}{\sum\limits_{y\in \mathcal{Y}} \exp \left( \mathbb{E}_{\tilde{P}(Y|X=x)}[r(y, Y)]/\tau\right)}
\end{equation}
where $\tilde{P}$ is the empirical distribution of $P$ defined in \eqref{eq:empirical}. 
Asymptotically as $n \to \infty$, $\tilde{P}$ converges to $P$. Thus, $\tilde{Q}$ asymptotically converges to $Q$.

Unfortunately, the empirical distribution $\tilde{Q}$ \eqref{eq:Q:empirical} is not efficient to compute, since the expectation term is inside the exponential function (See appendix~\ref{appendix:model} for approximately learning $\tilde{\theta}_{\textrm{SQDML}}$ in practice).
This leads us to seek an approximation of the softmax Q-distribution and its corresponding empirical distribution.
Here we propose the following $Q'$ distribution to approximate the softmax Q-distribution $Q$ defined in \eqref{eq:softmax-q}:
\begin{equation}\label{eq:softmax:q:prime}
Q'(Z=z|X=x; \tau) = \mathbb{E}_{P(Y|X=x)} \bigg[\frac{\exp \left( r(z, Y)/\tau\right)}{\sum\limits_{y\in \mathcal{Y}} \exp \left( r(y, Y)/\tau\right)} \bigg]
\end{equation}
where we move the expectation term outside the exponential function.
Then, the corresponding empirical distribution of $Q'(X, Z)$ can be written in the following form:
\begin{equation}\label{eq:q:prime:empirical}
\tilde{Q'}(X = x, Z = z) = \frac{1}{n} \sum\limits_{i=1}^{n} \left\{ \sum\limits_{y \in \mathcal{Y}} \frac{\exp (r(z, y)/\tau)}{\sum\limits_{y' \in \mathcal{Y}} \exp (r(y', y)/\tau)} \mathbb{I}(x_i = x, y_i = y) \right\}
\end{equation}
Approximating $\tilde{Q}(X, Z)$ with $\tilde{Q'}(X, Z)$, and plugging \eqref{eq:q:prime:empirical} into the RHS in \eqref{eq:sqdml:obj}, we have:
\begin{equation}\label{eq:sqdml:raml}
\begin{array}{rcl}
\hat{\theta}_\textrm{SQDML} & \approx & \argmin\limits_{\theta \in \Theta} \mathbb{E}_{\tilde{Q'}(X)}[\textrm{KL}(\tilde{Q'}(\cdot|X) || P_{\theta}(\cdot|X))] \\
 & = & \argmin\limits_{\theta \in \Theta} \sum\limits_{i=1}^{n} \Big\{-\sum\limits_{y \in \mathcal{Y}} q(y|y_i; \tau) \log P_{\theta}(y|x_i) \Big\} = \hat{\theta}_\textrm{RAML}
\end{array}
\end{equation}
where $q(y|y^*;\tau)$ is the exponentiated payoff distribution of RAML in~\eqref{eq:payoff}. 

Equation~\eqref{eq:sqdml:raml} states that RAML is an approximation of our proposed SQDML by approximating $\tilde{Q}$ with $\tilde{Q'}$.
Interestingly and mostly in practice, when the input is unique in the training data, 
i.e., $\not\exists (x_1, y_1), (x_2, y_2) \in D, \textrm{ s.t. } x_1 = x_2 \wedge y_1 \neq y_2$, we have $\tilde{Q} = \tilde{Q'}$, resulting in $\hat{\theta}_\textrm{SQDML} = \hat{\theta}_\textrm{RAML}$. 
It states that the estimated distribution $P_{\hat{\theta}_{\textrm{SQDML}}}$ and $P_{\hat{\theta}_{\textrm{RAML}}}$ are exactly the same when the input $x$ is unique in the training data, since the empirical distributions $\tilde{Q}$ and $\tilde{Q'}$ estimated from the training data are the same.

\subsection{Analysis and Discussion of SQDML}\label{subsec:analysis}
In \S\ref{subsec:sqdml}, we provided a theoretical interpretation of RAML by establishing the relationship between RAML and SQDML.
In this section, we try to answer the questions of RAML raised in \S\ref{subsec:raml} using this interpretation and further analyze the level of approximation from the softmax Q-distribution $Q$ in~\eqref{eq:q:bayes} to $Q'$ in~\eqref{eq:softmax:q:prime} by proving a upper bound of the approximation error.

Let's first use our interpretation to answer the three questions regarding RAML in \S\ref{subsec:raml}. First, instead of optimizing the KL divergence between the artificially designed exponentiated payoff distribution and the model distribution, RAML in our formulation approximately matches model distribution $P_{\theta}(\cdot|X=x)$ with the softmax Q-distribution $Q(\cdot|X=x; \tau)$. 
Second, based on our interpretation, asymptotically as $n \to \infty$, RAML learns a distribution that converges to $Q'(\cdot)$ in~\eqref{eq:softmax:q:prime}, and therefore \emph{approximately} converges to the softmax Q-distribution. 
Third, as mentioned in \S\ref{subsec:softmax:Q}, generating from the softmax Q-distribution produces the Bayes decision rule, which theoretically outperforms the prediction function from ML, w.r.t.~the expected reward. 

It is necessary to mention that both RAML and SQDML are trying to learn distributions, decoding from which (approximately) delivers the Bayes decision rule. There are other directions that can also achieve the Bayes decision rule, such as minimum Bayes risk decoding~\citep{kumar2004minimum}, which attempts to estimate the Bayes decision rule directly by computing expectation w.r.t the data distribution learned from training data.  

So far our discussion has concentrated on the theoretical interpretation and analysis of RAML, without any concerns for how well $Q'(X,Z)$ approximates $Q(X,Z)$.
Now, we characterize the approximating error by proving a upper bound of the KL divergence between them:
\begin{theorem}\label{thm:approx}
Given the input and output random variable $X \in \mathcal{X}$ and $Y \in \mathcal{Y}$ and the data distribution $P(X, Y)$. 
Suppose that the reward function is bounded $0 \leq r(y, y^*) \leq R$. 
Let $Q(Z|X; \tau)$ and $Q'(Z|X; \tau)$ be the softmax Q-distribution and its approximation defined in \eqref{eq:softmax-q} and \eqref{eq:softmax:q:prime}. 
Assume that $Q(X) = Q'(X) = P(X)$. Then,
\begin{equation}
\mathrm{KL}(Q(\cdot, \cdot)\|Q'(\cdot, \cdot)) \leq 2R/\tau
\end{equation}
\end{theorem}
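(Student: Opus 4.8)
The plan is to reduce the joint KL divergence to a per-input conditional KL divergence, and then bound the log-likelihood ratio $\log\big(Q(z|x;\tau)/Q'(z|x;\tau)\big)$ uniformly in $z$ using nothing but the boundedness $0\le r\le R$ and the finiteness of $\mathcal{Y}$. Since the statement assumes $Q(X)=Q'(X)=P(X)$, the chain rule for KL divergence makes the $X$-marginal term vanish, so
\begin{equation}
\mathrm{KL}(Q(\cdot,\cdot)\|Q'(\cdot,\cdot)) = \mathbb{E}_{P(X)}\big[\mathrm{KL}(Q(\cdot|X)\|Q'(\cdot|X))\big],
\end{equation}
and it suffices to prove $\mathrm{KL}(Q(\cdot|x)\|Q'(\cdot|x)) \le 2R/\tau$ for every fixed $x\in\mathcal{X}$.

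Fix $x$ and abbreviate $a_z := \mathbb{E}_{P(Y|X=x)}[r(z,Y)]$, so that $0\le a_z\le R$ for every $z$. First I would upper bound the softmax Q-distribution: its numerator satisfies $\exp(a_z/\tau)\le \exp(R/\tau)$, while its denominator satisfies $\sum_{y\in\mathcal{Y}}\exp(a_y/\tau)\ge |\mathcal{Y}|$, hence $Q(z|x;\tau)\le \exp(R/\tau)/|\mathcal{Y}|$. Next I would lower bound $Q'$: for every realization $y$ of $Y$, the integrand $\exp(r(z,y)/\tau)/\sum_{y'\in\mathcal{Y}}\exp(r(y',y)/\tau)$ has numerator at least $1$ and denominator at most $|\mathcal{Y}|\exp(R/\tau)$, so it is at least $1/(|\mathcal{Y}|\exp(R/\tau))$; since $Q'(z|x;\tau)$ is an average of these quantities over $P(Y|X=x)$ — the expectation sits \emph{outside} the exponential, which is precisely what makes this step work — the same lower bound $Q'(z|x;\tau)\ge 1/(|\mathcal{Y}|\exp(R/\tau))$ is inherited.

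Combining the two estimates, the factor $|\mathcal{Y}|$ cancels and $Q(z|x;\tau)/Q'(z|x;\tau)\le \exp(2R/\tau)$ for every $z$, so $\log\big(Q(z|x;\tau)/Q'(z|x;\tau)\big)\le 2R/\tau$. Substituting this into $\mathrm{KL}(Q(\cdot|x)\|Q'(\cdot|x))=\sum_{z\in\mathcal{Y}}Q(z|x;\tau)\log\big(Q(z|x;\tau)/Q'(z|x;\tau)\big)$ and using $\sum_{z}Q(z|x;\tau)=1$ gives the per-input bound $2R/\tau$, and taking the expectation over $P(X)$ finishes the argument.

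I do not anticipate a serious obstacle; the only delicate point is arranging the numerator/denominator bounds on $Q$ and $Q'$ so that the cardinality $|\mathcal{Y}|$ cancels and the final constant is dimension-free, which leans on $\mathcal{Y}$ being finite and on the expectation in $Q'$ being outside the exponential. As a side remark, this crude argument is presumably not tight: observing that $Q$ is the normalized weighted \emph{geometric} mean of the per-reference softmax distributions while $Q'$ is their \emph{arithmetic} mean, a single application of Jensen's inequality yields the sharper constant $R/\tau$; but $2R/\tau$ is all that the stated theorem requires.
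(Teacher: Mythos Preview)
Your proposal is correct and follows essentially the same approach as the paper: both arguments bound $Q(z|x)\le e^{R/\tau}/|\mathcal{Y}|$ and $Q'(z|x)\ge 1/(|\mathcal{Y}|e^{R/\tau})$ via the elementary numerator/denominator estimates on a softmax with inputs in $[0,R]$, cancel the $|\mathcal{Y}|$, and then sum the resulting uniform log-ratio bound against $Q(\cdot|x)$. Your side remark on the geometric-versus-arithmetic-mean sharpening to $R/\tau$ via Jensen is a nice extra observation not present in the paper.
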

From Theorem~\ref{thm:approx} (proof in Appendix~\ref{appendix:thm1}) we observe that the level of approximation mainly depends on two factors: the upper bound of the reward function ($R$) and the temperature parameter $\tau$. 
In practice, $R$ is often less than or equal to 1, when metrics like accuracy or BLEU are applied.

It should be noted that, at one extreme when $\tau$ becomes larger, the approximation error tends to be zero. 
At the same time, however, the softmax Q-distribution becomes closer to the uniform distribution $\mathrm{Unif}(\mathcal{Y})$, providing less information for prediction.
Thus, in practice, it is necessary to consider the trade-off between approximation error and predictive power.

What about the other extreme --- $\tau$ ``as close to zero as possible''? With suitable assumptions about the data distribution $P$, we can characterize the approximating error by using the same KL divergence: 
\begin{theorem}\label{thm:approx2}
Suppose that the reward function is bounded $0 \leq r(y, y^*) \leq R$, and $\forall y'\neq y$, $r(y, y) - r(y', y) \geq \gamma R$ where $\gamma \in (0, 1)$ is a constant.
Suppose additionally that, like a sub-Gaussian, for every $x \in \mathcal{X}$, $P(Y|X=x)$ satisfies the exponential tail bound w.r.t. $r$ --- that is, for each $x \in \mathcal{X}$, there exists a unique $y^* \in \mathcal{Y}$ such that for every $t \in [0, 1)$ 
\begin{equation}\label{eq:subgaussian}
P(r(y^*, y^*) - r(Y, y^*) \geq tR|X=x) \leq e^{-c\frac{t^2}{(1-t)^2}}
\end{equation}
where $c$ is a distribution-dependent constant. 
Assume that $Q(X) = Q'(X) = P(X)$. Denote $b=\frac{\gamma^2}{(1-\gamma)^2}$. Then, as $\tau \to 0$, 
\begin{equation}
\mathrm{KL}(Q(\cdot, \cdot)\|Q'(\cdot, \cdot)) \leq \frac{1}{1 + e^{cb}}.
\end{equation}
\end{theorem}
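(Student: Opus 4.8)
The plan is to compute $\lim_{\tau\to 0}\mathrm{KL}(Q(\cdot,\cdot)\|Q'(\cdot,\cdot))$ by identifying the limits of $Q(\cdot\,|\,x;\tau)$ and $Q'(\cdot\,|\,x;\tau)$ separately and then controlling the conditional KL uniformly in $x$. Writing $R(y\,|\,x)=\mathbb{E}_{P(Y|X=x)}[r(y,Y)]$, dividing numerator and denominator of \eqref{eq:softmax-q} by $\exp(R(h^*(x)|x)/\tau)$ shows $Q(\cdot\,|\,x;\tau)$ converges to the point mass at $h^*(x)=\argmax_y R(y|x)$ (assuming the maximizer is unique; ties only replace it by a uniform distribution on the tied set and can be handled separately). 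For $Q'$ the reward-gap hypothesis is the key: with the second argument fixed at $y$ it gives $r(y,y)>r(y',y)$ for every $y'\neq y$, so $y=\argmax_{y'} r(y',y)$ and the inner softmax in \eqref{eq:softmax:q:prime} tends to $\mathbb{I}(z=y)$; since $\mathcal{Y}$ is finite we may interchange this limit with the expectation over $Y$ and conclude $Q'(z\,|\,x;\tau)\to P(Y=z\,|\,X=x)$.

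The next step is to pin down $h^*(x)$. Because of the gap condition, the event $\{Y\neq y^*\}$ coincides with $\{r(y^*,y^*)-r(Y,y^*)\geq \gamma R\}$, so the exponential tail bound \eqref{eq:subgaussian} at $t=\gamma$ gives $P(Y\neq y^*\,|\,X=x)\leq e^{-c\gamma^2/(1-\gamma)^2}=e^{-cb}$, i.e.\ $P(y^*\,|\,x)\geq 1-e^{-cb}$. A direct comparison of conditional rewards then gives, for every $y\neq y^*$,
\[
R(y^*\,|\,x)-R(y\,|\,x)\;\ge\; P(y^*|x)\,\gamma R-\bigl(1-P(y^*|x)\bigr)R\;=\;R\bigl((1+\gamma)P(y^*|x)-1\bigr),
\]
using $r(y^*,y^*)-r(y,y^*)\ge \gamma R$ on the $y^*$-term and $0\le r\le R$ on the rest; once $P(y^*|x)$ is close enough to $1$ this is a strictly positive constant independent of $x$, so $h^*(x)=y^*$ with a uniform Bayes-reward margin $\delta_0>0$, and the limit of $Q(\cdot|x;\tau)$ is the point mass at $y^*$.

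With $h^*(x)=y^*$ in hand I would combine the two limits. Since $Q(X)=Q'(X)=P(X)$, we have $\mathrm{KL}(Q(\cdot,\cdot)\|Q'(\cdot,\cdot))=\mathbb{E}_{P(X)}\bigl[\mathrm{KL}(Q(\cdot|X)\|Q'(\cdot|X))\bigr]$. Viewing $Q'(\cdot|x;\tau)=\sum_{y}P(y|x)\,q(\cdot|y;\tau)$ as a mixture of the payoff distributions \eqref{eq:payoff} and keeping only the $y^*$ component, $Q'(z|x;\tau)\ge P(y^*|x)\,q(z|y^*;\tau)$ for every $z$, whence
\[
\mathrm{KL}\bigl(Q(\cdot|x)\,\|\,Q'(\cdot|x)\bigr)\;\le\;\mathrm{KL}\bigl(Q(\cdot|x)\,\|\,q(\cdot|y^*;\tau)\bigr)\;-\;\log P(y^*|x).
\]
As $\tau\to 0$ the first term vanishes: both $Q(\cdot|x;\tau)$ and $q(\cdot|y^*;\tau)$ concentrate at $y^*$, and on the off-mode atoms $Q(z|x;\tau)$ decays like $e^{-\delta_0/\tau}$ (using the uniform margin) while $\log\bigl(Q(z|x;\tau)/q(z|y^*;\tau)\bigr)$ grows at most linearly in $1/\tau$, so exponential decay wins — and uniformly in $x$. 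The remaining term satisfies $-\log P(y^*|x)\le -\log(1-e^{-cb})$, and a routine scalar inequality bounding $-\log(1-u)$ for $u\in[0,e^{-cb}]$ yields the claimed $1/(1+e^{cb})$. Finally, uniform convergence in $x$ (equivalently, dominated convergence) lets me pass $\mathbb{E}_{P(X)}$ through the limit.

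I expect the main obstacle to be precisely this interchange of $\lim_{\tau\to0}$ with the sum over $\mathcal{Y}$ and the expectation over $X$: the bound $2R/\tau$ from Theorem~\ref{thm:approx} blows up as $\tau\to 0$ and cannot serve as a dominating function, so one must extract a genuinely uniform estimate from the $e^{-\delta_0/\tau}$ decay of the off-mode mass of $Q$ together with an $e^{-R/\tau}$-type floor on $Q'$, which in turn rests on the uniform positivity of the Bayes-reward margin — hence the care needed in showing $h^*(x)=y^*$ for every $x$ rather than merely for a typical $x$. The last (purely scalar) inequality, controlling $-\log P(y^*|x)$ through $P(y^*|x)\ge 1-e^{-cb}$, is where the exact form of the constant in the statement is fixed.
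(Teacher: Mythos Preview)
Your overall strategy is sound and lands at the same endpoint as the paper, but by a genuinely different route. The paper builds a chain of explicit two-sided envelopes: first on $q(y|y^*)$, then on $Q'(z|x)$ and on $\mathbb{E}[r(z,Y)]/\tau$, and finally on $Q(z|x)$, in each case separately for $z=y^*$ and $z\neq y^*$; it then substitutes these bounds into the KL sum and lets $\tau\to 0$. Your mixture lower bound $Q'(\cdot|x)\ge P(y^*|x)\,q(\cdot|y^*;\tau)$ bypasses that machinery and reduces the problem at once to $\mathrm{KL}\bigl(Q(\cdot|x)\,\|\,q(\cdot|y^*;\tau)\bigr)-\log P(y^*|x)$, after which you only need to argue the first term vanishes. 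The paper's route yields finite-$\tau$ information about $Q$ and $Q'$ individually, which is of independent interest; yours is shorter and conceptually cleaner for the limiting statement, and both converge to the same residual $-\log(1-e^{-cb})$.

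Two caveats are worth naming explicitly. First, your uniform Bayes-reward margin $\delta_0$ is exactly the paper's $\alpha R$ with $\alpha=\gamma-(1+\gamma)e^{-cb}$, and both arguments silently require $\alpha>0$, i.e.\ $e^{-cb}<\gamma/(1+\gamma)$; this is not among the stated hypotheses, and your phrase ``once $P(y^*|x)$ is close enough to $1$'' acknowledges but does not discharge the condition. Second, the ``routine scalar inequality'' you invoke at the end does not hold as written: from $-\log(1-e^{-cb})\le e^{-cb}/(1-e^{-cb})$ one obtains $1/(e^{cb}-1)$, not $1/(1+e^{cb})$, and in fact $-\log(1-u)>u/(1+u)$ for every $u\in(0,1)$. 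The paper makes the identical slip in its final displayed line, so you match the paper exactly here --- but the constant in the theorem statement itself appears to be off, and $1/(e^{cb}-1)$ is what both arguments actually deliver.
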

Theorem~\ref{thm:approx2}~(proof in Appendix~\ref{appendix:thm2}) indicates that RAML can also achieve little approximating error when $\tau$
is close to zero.

\section{Experiments}\label{sub:experiment}
In this section, we performed two sets of experiments to verity our theoretical analysis of the relation between SQDML and RAML.
As discussed in \S\ref{subsec:sqdml}, RAML and SQDML deliver the same predictions when the input $x$ is unique in the data. 
Thus, in order to compare SQDML against RAML, the first set of experiments are designed on two data sets in which $x$ is not unique --- synthetic data for cost-sensitive multi-class classification, and the MSCOCO benchmark dataset~\citep{mscoco} for image captioning.
To further confirm the advantages of RAML (and SQDML) over ML, and thus the necessity for better theoretical understanding, we performed the second set of experiments on three structured prediction tasks in NLP. In these cases SQDML reduces to RAML, as the input is unique in these three data sets.

\subsection{Experiments on SQDML}\label{subsec:exp:sqdml}
\subsubsection{Cost-sensitive Multi-class Classification}\label{subsubsec:exp:classify}
First, we perform experiments on synthetic data for cost-sensitive multi-class classification designed to demonstrate that RAML learns a distribution approximately producing the Bayes decision rule, which is asymptotically the prediction function delivered by SQDML.

The synthetic data set is for a 4-class classification task, where $x \in \mathcal{X} = [-1, +1] \times [-1, +1] \subset \mathcal{R}^2$, and $y \in \mathcal{Y} = \{0, 1, 2, 3\}$. We define four base points, one for each class: 
\begin{displaymath}
\left[\begin{array}{c}
x_0 \\
x_1 \\
x_2 \\
x_3
\end{array}\right] = 
\left[\begin{array}{cc}
+1 & +1 \\
+1 & -1 \\
-1 & -1 \\
-1 & +1
\end{array}\right]
\end{displaymath}
For data generation, the distribution $P(X)$ is the uniform distribution on $\mathcal{X}$, and the log form of the conditional distribution $P(Y|X=x)$ for each $x \in \mathcal{X}$ is proportional to the negative distance of each base point:
\begin{equation}\label{eq:synthetic}
\log P(Y=y|X=x) \propto -d(x, x_y), \textrm{ for } y \in \{0, 1, 2, 3\}
\end{equation}
where $d(\cdot, \cdot)$ is the Euclidean distance between two points. 
To generate training data, we first draw 1 million inputs $x$ from $P(X)$. 
Then, we independently generate 10 outputs y from $P(Y|X=x)$ for each $x$ to build a data set with multiple references. 
Thus, the total number of training instances is 10 million.
For validation and test data, we independently generate 0.1 million pairs of $(x, y)$ from $P(X, Y)$, respectively.

The model we used is a feed-forward (dense) neural networks with 2 hidden layers, each of which has 8 units.
Optimization is performed with mini-batch stochastic gradient descent (SGD) with learning rate 0.1 and momentum 0.9. 
Each model is trained with 100 epochs and we apply early stopping~\citep{giles2001} based on performance on validation sets.

The reward function $r(\cdot, \cdot)$ is designed to distinguish the four classes. For ``correct'' predictions, the specific reward values assigned for the four classes are:
\begin{displaymath}
\left[\begin{array}{c}
r(0, 0) \\
r(1, 1) \\
r(2, 2) \\
r(3, 3) 
\end{array}\right] = 
\left[\begin{array}{c}
e^{2.0} \\
e^{1.6} \\
e^{1.2} \\
e^{1.1}
\end{array}\right]
\end{displaymath}
For ``wrong'' predictions, rewards are always zero, i.e. $r(y, y^*)=0$ when $y \neq y^*$. 

\begin{figure}[t]
\centering
\begin{minipage}[t]{0.49\textwidth}
\centering
\subfloat[Validation]{
\includegraphics[scale=0.45]{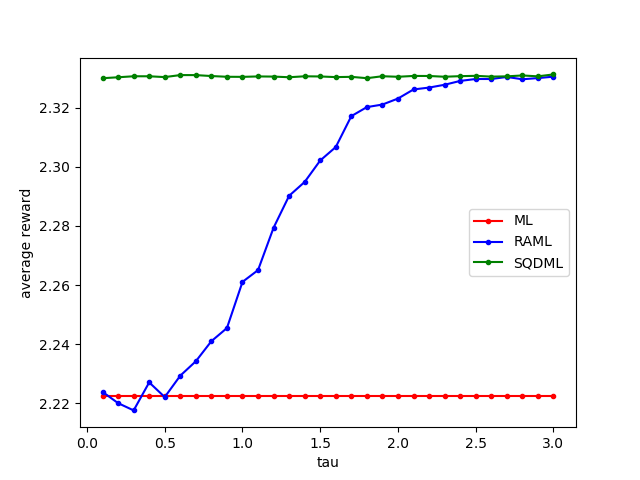}
\label{subfig:short_dev}
}
\end{minipage}
\begin{minipage}[t]{0.49\textwidth}
\centering
\subfloat[Test]{
\includegraphics[scale=0.45]{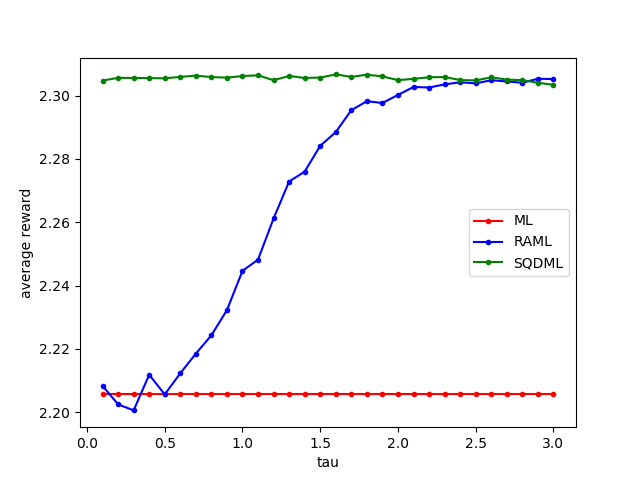}
\label{subfig:short_test}
}
\end{minipage}
\caption{Average reward relative to the temperature parameter $\tau$, ranging from 0.1 to 3.0,  on validation and test sets, respectively.}
\label{fig:short_tau}
\end{figure}

\begin{figure}[t]
\centering
\begin{minipage}[t]{0.49\textwidth}
\centering
\subfloat[Validation]{
\includegraphics[scale=0.45]{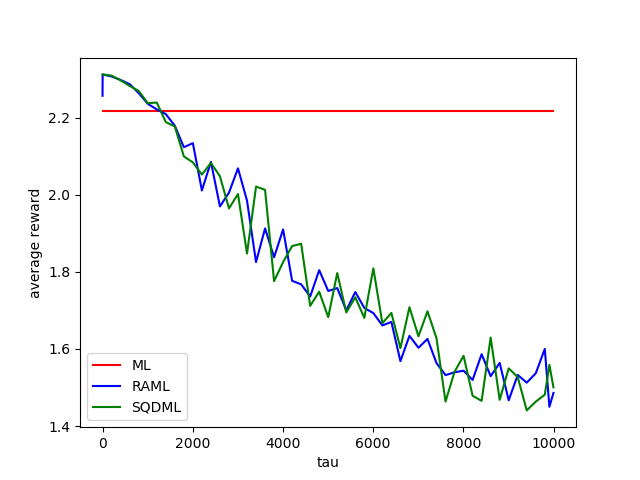}
\label{subfig:long_dev}
}
\end{minipage}
\begin{minipage}[t]{0.49\textwidth}
\centering
\subfloat[Test]{
\includegraphics[scale=0.45]{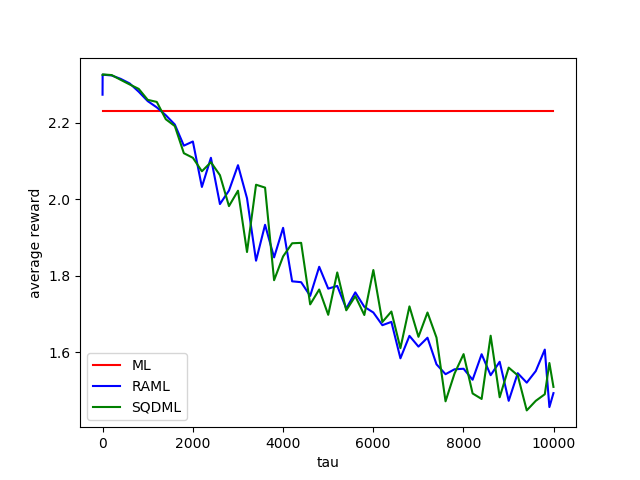}
\label{subfig:long_test}
}
\end{minipage}
\caption{Average reward relative to a wide range of $\tau$ (from 1.0 to 10,000) on validation and test sets, respectively.}
\label{fig:long_tau}
\end{figure}
Figure~\ref{fig:short_tau} depicts the effect of varying the temperature parameter $\tau$ on model performance, ranging from 0.1 to 3.0 with step 0.1.
For each fixed $\tau$, we report the mean performance over 5 repetitions.
Figure~\ref{fig:short_tau} shows the averaged rewards obtained as a function of $\tau$ on both validation and test datasets of ML, RAML and SQDML, respectively.
From Figure~\ref{fig:short_tau} we can see that when $\tau$ increases, the performance gap between SQDML and RAML keeps decreasing, indicting that RAML achieves better approximation to SQDML. This evidence verities the statement in Theorem~\ref{thm:approx} that the approximating error between RAML and SQDML decreases when $\tau$ continues to grow. 

The results in Figure~\ref{fig:short_tau} raise a question: does larger $\tau$ necessarily yield better performance for RAML? 
To further illustrate the effect of $\tau$ on model performance of RAML and SQDML, we perform experiments with a wide range of $\tau$ --- from 1 to 10,000 with step 200. We also repeat each experiment 5 times. The results are shown in Figure~\ref{fig:long_tau}.
We see that the model performance (average reward), however, has not kept growing with increasing $\tau$. 
As discussed in \S\ref{subsec:analysis}, the softmax Q-distribution becomes closer to the uniform distribution when $\tau$ becomes larger, making it less expressive for prediction. Thus, when applying RAML in practice, considerations regarding the trade-off between approximating error and predictive power of model are needed.
More details, results and analysis of the conducted experiments are provided in Appendix~\ref{appendix:synthetic}.

\begin{table}
\centering
\resizebox{1.0 \textwidth}{!}{\begin{tabular}{ c|c:c|c:c||c|c:c|c:c }
  & \multicolumn{2}{c|}{\textsc{RAML}} & \multicolumn{2}{c||}{\textsc{SQDML}} & & \multicolumn{2}{c|}{\textsc{RAML}} & \multicolumn{2}{c}{\textsc{SQDML}} \\ \cline{2-5}\cline{7-10}
$\tau$ & Reward & BLEU & Reward & BLEU & $\tau$ & Reward & BLEU & Reward & BLEU \\ \hline\hline
$\tau=0.80$ & 10.77 & 27.02 & 10.82 & 27.08 & $\tau=1.00$ & 10.84 & 27.26 & 10.82 & 27.03 \\
$\tau=0.85$ & 10.81 & 27.27 & 10.78 & 26.92 & $\tau=1.05$ & 10.82 & 27.29 & 10.80 & 27.20 \\
$\tau=0.90$ & \textbf{10.88} & \textbf{27.62} & \textbf{10.91} & \textbf{27.54} & $\tau=1.10$ & 10.74 & 26.89 & 10.78 & 26.98 \\
$\tau=0.95$ & 10.82 & 27.33 & 10.79 & 27.02 & $\tau=1.15$ & 10.77 & 27.01 & 10.72 & 26.66 \\
\end{tabular}}
\caption{Average \textbf{Reward} (sentence-level BLEU) and corpus-level \textbf{BLEU} (standard evaluation metric) scores for image captioning task with different $\tau$.}
\label{tab:img_caption:results}
\end{table}

\subsubsection{Image Captioning with Multiple References}
\label{subsubsec:exp:image_caption}

Second, to show that optimizing toward our proposed SQDML objective yields better predictions than RAML on real-world structured prediction tasks, we evaluate on the MSCOCO image captioning dataset.
This dataset contains 123,000 images, each of which is paired with as least five manually annotated captions. 
We follow the offline evaluation setting in~\citep{karpathy15imagecaption}, and reserve 5,000 images for validation and testing, respectively. We implemented a simple neural image captioning model using a pre-trained VGGNet as the encoder and a Long Short-Term Memory (LSTM) network as the decoder. Details of the experimental setup are in Appendix~\ref{appendix:image_caption}.

As in \S\ref{subsubsec:exp:classify}, for the sake of comparing SQDML with RAML to verify our theoretical analysis, we use the average reward as the performance measure by simply defining the reward as pairwise sentence level BLEU score between model's prediction and each reference caption\footnote{Not that this is different from standard multi-reference sentence-level BLEU, which counts n-gram matches w.r.t. all sentences then uses these sufficient statistics to calculate a final score.}, 
though the standard benchmark metric commonly used in image captioning (e.g., corpus-level BLEU-4 score) is not simply defined as averaging over the pairwise rewards between prediction and reference captions.

We use stochastic gradient descent to optimize the objectives for SQDML~\eqref{eq:sqdml:obj} and RAML~\eqref{eq:raml}. 
However, the denominators of the softmax-Q distribution for SQDML $\tilde{Q}(Z|X; \tau)$~\eqref{eq:Q:empirical} and the payoff distribution for RAML $q(y|y^*; \tau)$~\eqref{eq:payoff} contain summations over intractable exponential hypotheses space $\mathcal{Y}$.
We therefore propose a simple heuristic approach to approximate the denominator by restricting the exponential space $\mathcal{Y}$ using a fixed set $\mathcal{S}$ of sampled targets, i.e., $\mathcal{Y} \approx \mathcal{S}$.
Approximating the intractable hypotheses space using sampling is not new in structured prediction, and has been shown effective in optimizing neural structured prediction models~\citep{shen-EtAl:2016:P16-1}.
Specifically, the sampled candidate set $\mathcal{S}$ is constructed by 
(i) including each ground-truth reference $y^*$ into $\mathcal{S}$; and 
(ii) uniformly replacing an $n$-gram ($n \in \{ 1, 2, 3 \}$) in one (randomly sampled) reference $y^*$ with a randomly sampled $n$-gram. We refer to this approach as \textit{$n$-gram replacement}. We provide more details of the training procedure in Appendix~\ref{appendix:image_caption}.

Table~\ref{tab:img_caption:results} lists the results. We evaluate on both the average reward and the benchmark metric (corpus-level BLEU-4). 
We also tested on a vanilla ML baseline, which achieves 10.71 average reward and 26.91 corpus-level BLEU.
Both SQDML and RAML outperform ML according to the two metrics. 
Interestingly, comparing SQDML with RAML we did not observe a significant improvement of average reward.
We hypothesize that this is due to the fact that the reference captions for each image are largely different, making it highly non-trivial for the model to predicate a ``consensus'' caption that agrees with multiple references.
As an example, we randomly sampled 300 images from the validation set and compute the averaged sentence-level BLEU between two references, which is only 10.09.
Nevertheless, through case studies we still found some interesting examples, which demonstrate that SQDML is capable of generating predictions that match with multiple candidates.
Figure~\ref{fig:image_caption_case_study} gives two examples.
In the two examples, SQDML's predictions match with multiple references, registering the highest average reward. 
On the other hand, RAML gives sub-optimal predictions in terms of average reward since it is an approximation of SQDML.
And finally for ML, since its objective is solely maximizing the reward w.r.t a single reference, it gives the lowest average reward, while achieving higher maximum reward.

\begin{figure}[tb]
  \centering
  \includegraphics[width=0.99\textwidth]{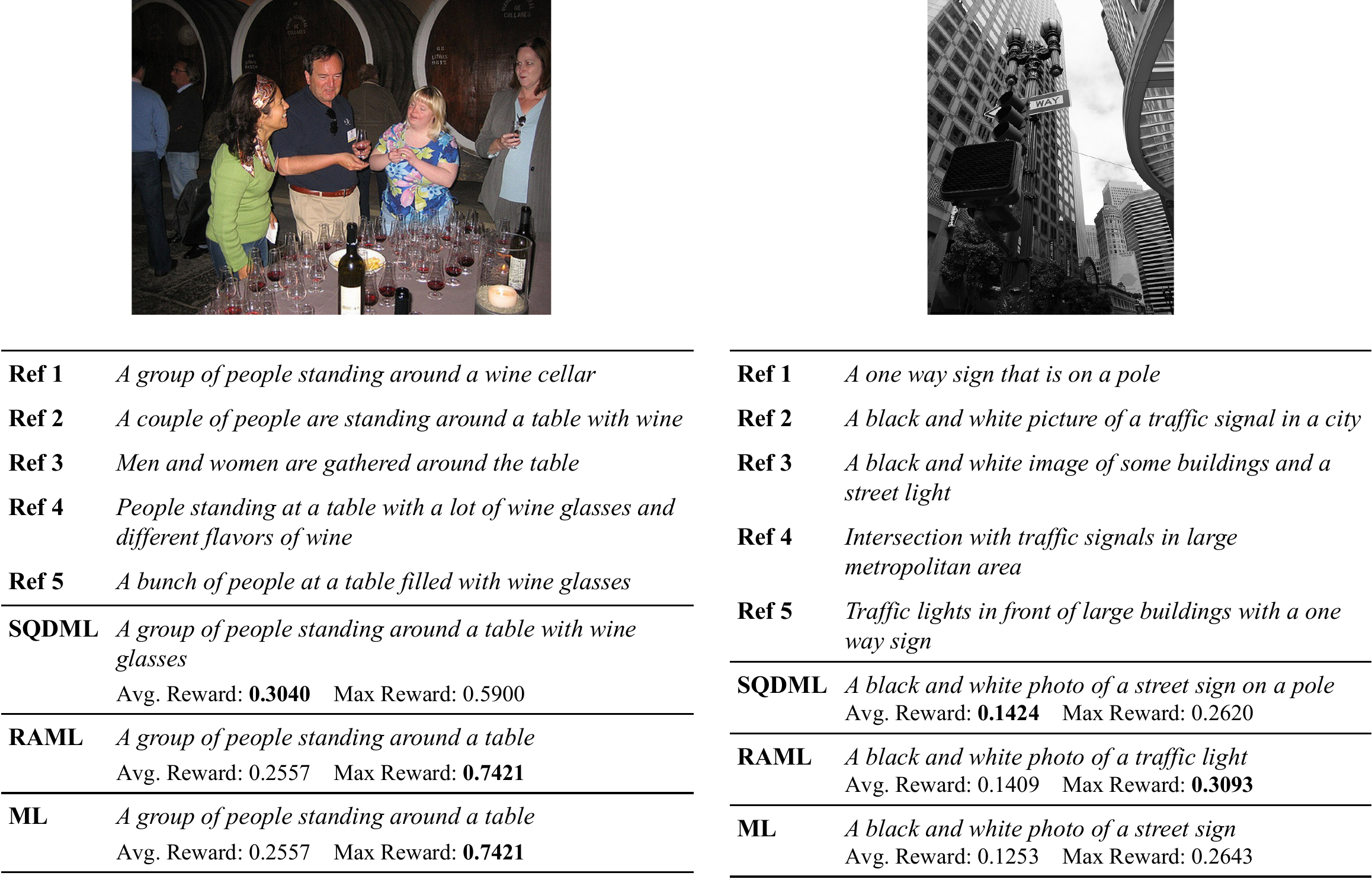}
  \caption{Testing examples from MSCOCO image captioning task}
  \label{fig:image_caption_case_study}
\end{figure}

\subsection{Experiments on Structured Prediction}

\citet{norouzi2016reward} already evaluated the effectiveness of RAML on sequence prediction tasks of speech recognition and machine translation using neural sequence-to-sequence models. 
In this section, we further confirm the empirical success of RAML (and SQDML) over ML: 
(i) We apply RAML on three structured prediction tasks in NLP, including named entity recognition (NER), dependency parsing and machine translation (MT), using both classical feature-based log-linear models (NER and parsing) and state-of-the-art attentional recurrent neural networks (MT).
(ii) Different from~\citet{norouzi2016reward} where edit distance is uniformly used as a surrogate training reward and the learning objective in~\eqref{eq:raml} is approximated through sampling, we use task-specific rewards, defined on sequential (NER), tree-based (parsing) and complex irregular structures (MT).
Specifically, instead of sampling, we apply efficient dynamic programming algorithms (NER and parsing) to directly compute the analytical solution of~\eqref{eq:raml}.
(iii) We present further analysis comparing RAML with ML, showing that due to different learning objectives, RAML registers better results under task-specific metrics, while ML yields better exact-match accuracy.


\subsubsection{Setup}
\label{subsec:exp_setup}

\begin{table}
\begin{minipage}[t]{0.49\textwidth}
\centering
\resizebox{1.0 \textwidth}{!}{
\begin{tabular}{c|c:c|c:c}
 & \multicolumn{2}{c|}{\textsc{Dev.} Results} & \multicolumn{2}{c}{\textsc{Test} Results} \\
\cline{2-5}
Method & Acc & F1 & Acc & F1 \\
\hline\hline
ML Baseline & 98.2 & 90.4 & 97.0 & 84.9 \\
\hline
$\tau=0.1$ & 98.3 & 90.5 & 97.0 & 85.0 \\
$\tau=0.2$ & \textbf{98.4} & \textbf{91.2} & \textbf{97.3} & \textbf{86.0} \\
$\tau=0.3$ & 98.3 & 90.2 & 97.1 & 84.7 \\
$\tau=0.4$ & 98.3 & 89.6 & 97.1 & 84.0 \\
$\tau=0.5$ & 98.3 & 89.4 & 97.1 & 83.3 \\
$\tau=0.6$ & 98.3 & 88.9 & 97.0 & 82.8 \\
$\tau=0.7$ & 98.3 & 88.6 & 97.0 & 82.2 \\
$\tau=0.8$ & 98.2 & 88.5 & 96.9 & 81.9 \\
$\tau=0.9$ & 98.2 & 88.5 & 97.0 & 82.1 \\
\end{tabular}}
\caption{Token accuracy and official F1 for NER.}\label{tab:ner}
\end{minipage}%
\hfill
\begin{minipage}[t]{0.49\textwidth}
\centering
\resizebox{0.98 \textwidth}{!}{\begin{tabular}{c|c|c}
 & \textsc{Dev.} Results & \textsc{Test} Results \\
\cline{2-3}
Method & UAS & UAS \\
\hline\hline
ML Baseline & 91.3 & 90.7 \\
\hline
$\tau=0.1$ & 91.0 & 90.6 \\
$\tau=0.2$ & 91.5 & 91.0 \\
$\tau=0.3$ & \textbf{91.7} & \textbf{91.1} \\
$\tau=0.4$ & 91.4 & 90.8 \\
$\tau=0.5$ & 91.2 & 90.7 \\
$\tau=0.6$ & 91.0 & 90.6 \\
$\tau=0.7$ & 90.8 & 90.4 \\
$\tau=0.8$ & 90.8 & 90.3 \\
$\tau=0.9$ & 90.7 & 90.1 \\
\end{tabular}}
\caption{UAS scores for dependency parsing.}\label{tab:dp}
\end{minipage}
\end{table}

In this section we describe experimental setups for three evaluation tasks. We refer readers to Appendix~\ref{appendix:SP} for dataset statistics, modeling details and training procedure.

\paragraph{Named Entity Recognition (NER)} 
For NER, we experimented on the English data from CoNLL 2003 shared task~\citep{TjongKimSang:2003}.
There are four predefined types of named entities: \emph{PERSON, LOCATION, ORGANIZATION,} and \emph{MISC}.
The dataset includes 15K training sentences, 3.4K for validation, and 3.7K for testing.

We built a linear CRF model~\citep{lafferty2001conditional} with the same features used in \citet{finkel-grenager-manning:2005:ACL}. 
Instead of using the official F1 score over complete span predictions, we use token-level accuracy as the training reward, as this metric can be factorized to each word, and hence there exists efficient dynamic programming algorithm to 
compute the expected log-likelihood objective in~\eqref{eq:raml}.

\paragraph{Dependency Parsing} For dependency parsing, we evaluate on the English Penn Treebanks (PTB)~\citep{Marcus:1993}.
We follow the standard splits of PTB, using sections 2-21 for training, section 22 for validation and 23 for testing. 
We adopt the Stanford Basic Dependencies~\citep{de2006generating} using the Stanford parser v3.3.0\footnote{http://nlp.stanford.edu/software/lex-parser.shtml}.
We applied the same data preprocessing procedure as in \citet{dyer-EtAl:2015:ACL-IJCNLP}.

We adopt an edge-factorized tree-structure log-linear model with the same features used in \citet{ma2015probabilistic}.
We use the unlabeled attachment score (UAS) as the training reward, which is also the official evaluation metric of parsing performance.
Similar as NER, the expectation in~\eqref{eq:raml} can be computed deficiently using dynamic programming since UAS can be factorized to each edge.

\paragraph{Machine Translation (MT)} We tested on the German-English machine translation task in the IWSLT 2014 evaluation campaign~\citep{cettolo14iwslt}, a widely-used benchmark for evaluating optimization techniques for neural sequence-to-sequence models.
The dataset contains 153K training sentence pairs.
We follow previous works~\citep{wiseman16beamsearch,bahdanau2016actor,li16decode} and use an attentional neural encoder-decoder model with LSTM networks. The size of the LSTM hidden states is 256.
Similar as in~\S\ref{subsubsec:exp:image_caption}, we use the sentence level BLEU score as the training reward and approximate the learning objective using $n$-gram replacement ($n \in \{1, 2, 3, 4\}$). We evaluate using standard corpus-level BLEU.

\subsubsection{Main Results}
\label{subsec:exp_main_results}


\begin{table}
\centering
\resizebox{0.6 \textwidth}{!}{\begin{tabular}{ c|c:c||c|c:c }
$\tau$ & S-B & C-B & $\tau$ & S-B & C-B \\ \hline\hline
$\tau=0.1$ & 28.67 & 27.42 & $\tau=0.6$ & 29.37 & 28.49 \\
$\tau=0.2$ & 29.44 & 28.38 & $\tau=0.7$ & 29.52 & 28.59 \\
$\tau=0.3$ & 29.59 & 28.40 & $\tau=0.8$ & 29.54 & 28.63 \\
$\tau=0.4$ & \textbf{29.80} & \textbf{28.77} & $\tau=0.9$ & 29.48 & 28.58 \\
$\tau=0.5$ & 29.55 & 28.45 & $\tau=1.0$ & 29.34 & 28.40 \\
\end{tabular}}
\caption{Sentence-level BLEU (\textbf{S-B}, training reward) and corpus-level BLEU  (\textbf{C-B}, standard evaluation metric) scores for RAML with different $\tau$.}
\label{tab:raml_nmt}
\centering
\begin{tabular}{l|c|c}
  Methods & ML Baseline & Proposed Model  \\ \hline\hline
  \citet{ranzato15mixer} & 20.10 & 21.81 \\
  \citet{wiseman16beamsearch} & 24.03 & 26.36 \\
  \citet{li16decode} & \textbf{27.90} & 28.30 \\
  \citet{bahdanau2016actor} & 27.56 & 28.53 \\ \hline
  This Work & 27.66 & \textbf{28.77}
\end{tabular}
\caption{Comparison of our proposed approach with previous works. All previous methods require pre-training using an ML baseline, while RAML learns from scratch.}
\label{tab:raml_nmt_compare}
\centering
\begin{tabular}{l|ccc|cc|ccc}
   & \multicolumn{3}{c|}{NER} & \multicolumn{2}{c|}{Parsing} & \multicolumn{3}{c}{MT}  \\ \hline\hline
  Metric & Acc. & F1 & E.M. & UAS & E.M. & S-B & C-B & E.M.   \\
  ML & 97.0 & 84.9 & 78.8 & 90.7 & \textbf{39.9} & 29.15 & 27.66 & \textbf{3.79} \\
  RAML & \textbf{97.3} & \textbf{86.0} & \textbf{80.1} & \textbf{91.1} & 39.4 & \textbf{29.80} & \textbf{28.77} & 3.35
\end{tabular}
\caption{Performance of ML and RAML under different metrics for the three tasks on test sets. \textbf{E.M.} refers to exact match accuracy.}
\label{tab:raml_exact_match}
\end{table}

The results of NER and dependency parsing are shown in Table~\ref{tab:ner} and Table~\ref{tab:dp}, respectively. 
We observed that the RAML model obtained the best results at $\tau=0.2$ for NER, and $\tau=0.3$ for dependency parsing.
Beyond $\tau=0.4$, RAML models get worse than the ML baseline for both the two tasks, showing that in practice selection of temperature $\tau$ is needed.
In addition, the rewards we directly optimized in training (token-level accuracy for NER and UAS for dependency parsing) are more stable w.r.t.~$\tau$ than the evaluation metrics (F1 in NER), 
illustrating that in practice, choosing a training reward that correlates well with the evaluation metric is important.

Table~\ref{tab:raml_nmt} summarizes the results for MT.
We also compare our model with previous works on incorporating task-specific rewards (i.e., BLEU score) in optimizing neural sequence-to-sequence models (c.f.~Table~\ref{tab:raml_nmt_compare}).
Our approach, albeit simple, surprisingly outperforms previous works.
Specifically, all previous methods require a pre-trained ML baseline to initialize the model, while RAML learns from scratch.
This suggests that RAML is easier and more stable to optimize compared with existing approaches like RL (e.g., \citet{ranzato15mixer} and \citet{bahdanau2016actor}), which requires sampling from the moving model distribution and suffers from high variance.
Finally, we remark that RAML performs consistently better than the ML (27.66) across most  temperature terms.

\subsubsection{Further Comparison with Maximum Likelihood}
\label{subsec:exp_compare_with_ml}
Table~\ref{tab:raml_exact_match} illustrates the performance of ML and RAML under different metrics of the three tasks.
We observe that RAML outperforms ML on both the directly optimized rewards (token-level accuracy for NER, UAS for dependency parsing and sentence-level BLEU for MT) and task-specific evaluation metrics (F1 for NER and corpus-level BLEU for MT).
Interestingly, we find a trend that ML gets better results on two out of the three tasks under exact match accuracy, which is the reward that ML attempts to optimize (as discussed in \eqref{eq:bayes:classifier}).
This is in line with our theoretical analysis, in that RAML and ML achieve better prediction functions w.r.t.~their corresponding rewards they try to optimize.

\section{Conclusion}\label{sec:conclusion}

In this work, we propose the framework of estimating the softmax Q-distribution from training data.
Based on our theoretical analysis, asymptotically, the prediction function 
learned by RAML approximately achieves the Bayes decision rule.
Experiments on three structured prediction tasks demonstrate that RAML consistently outperforms ML baselines.

\bibliographystyle{iclr2018_conference}
\bibliography{ref}

\begin{thebibliography}{36}
\providecommand{\natexlab}[1]{#1}
\providecommand{\url}[1]{\texttt{#1}}
\expandafter\ifx\csname urlstyle\endcsname\relax
  \providecommand{\doi}[1]{doi: #1}\else
  \providecommand{\doi}{doi: \begingroup \urlstyle{rm}\Url}\fi

\bibitem[Bahdanau et~al.(2015)Bahdanau, Cho, and Bengio]{bahdanau2014neural}
Dzmitry Bahdanau, Kyunghyun Cho, and Yoshua Bengio.
\newblock Neural machine translation by jointly learning to align and
  translate.
\newblock In \emph{Proceedings of ICLR}, San Diego, California, 2015.

\bibitem[Bahdanau et~al.(2017)Bahdanau, Brakel, Xu, Goyal, Lowe, Pineau,
  Courville, and Bengio]{bahdanau2016actor}
Dzmitry Bahdanau, Philemon Brakel, Kelvin Xu, Anirudh Goyal, Ryan Lowe, Joelle
  Pineau, Aaron Courville, and Yoshua Bengio.
\newblock An actor-critic algorithm for sequence prediction.
\newblock In \emph{Proceedings of ICLR}, Toulon, France, 2017.

\bibitem[Caruana et~al.(2001)Caruana, Lawrence, and Lee]{giles2001}
Rich Caruana, Steve Lawrence, and Giles Lee.
\newblock Overfitting in neural nets: Backpropagation, conjugate gradient, and
  early stopping.
\newblock In \emph{Proceedings of NIPS}, volume~13, pp.\  402. MIT Press, 2001.

\bibitem[Cettolo et~al.(2014)Cettolo, Niehues, Stuker, Bentivogli, and
  Federico]{cettolo14iwslt}
Mauro Cettolo, Jan Niehues, Sebastian Stuker, Luisa Bentivogli, and Marcello
  Federico.
\newblock Report on the 11th iwslt evaluation campaign, iwslt 2014.
\newblock In \emph{Proceedings for the International Workshop on Spoken
  Language Translation}, pp.\  2--11, 2014.

\bibitem[Chen et~al.(2015)Chen, Fang, Lin, Vedantam, Gupta, Doll{\'{a}}r, and
  Zitnick]{mscoco}
Xinlei Chen, Hao Fang, Tsung{-}Yi Lin, Ramakrishna Vedantam, Saurabh Gupta,
  Piotr Doll{\'{a}}r, and C.~Lawrence Zitnick.
\newblock Microsoft {COCO} captions: Data collection and evaluation server.
\newblock \emph{CoRR}, abs/1504.00325, 2015.

\bibitem[De~Marneffe et~al.(2006)De~Marneffe, MacCartney, Manning,
  et~al.]{de2006generating}
Marie-Catherine De~Marneffe, Bill MacCartney, Christopher~D Manning, et~al.
\newblock Generating typed dependency parses from phrase structure parses.
\newblock In \emph{Proceedings of LREC}, pp.\  449--454, 2006.

\bibitem[Dyer et~al.(2015)Dyer, Ballesteros, Ling, Matthews, and
  Smith]{dyer-EtAl:2015:ACL-IJCNLP}
Chris Dyer, Miguel Ballesteros, Wang Ling, Austin Matthews, and Noah~A. Smith.
\newblock Transition-based dependency parsing with stack long short-term
  memory.
\newblock In \emph{Proceedings of ACL}, pp.\  334--343, Beijing, China, July
  2015.

\bibitem[Finkel et~al.(2005)Finkel, Grenager, and
  Manning]{finkel-grenager-manning:2005:ACL}
Jenny~Rose Finkel, Trond Grenager, and Christopher Manning.
\newblock Incorporating non-local information into information extraction
  systems by gibbs sampling.
\newblock In \emph{Proceedings of ACL}, pp.\  363--370, Ann Arbor, Michigan,
  June 2005.

\bibitem[Gimpel(2012)]{gimpel-12c}
K.~Gimpel.
\newblock \emph{Discriminative Feature-Rich Modeling for Syntax-Based Machine
  Translation}.
\newblock PhD thesis, Carnegie Mellon University, 2012.

\bibitem[Gimpel \& Smith(2010)Gimpel and Smith]{gimpel-smith:2010:NAACLHLT}
Kevin Gimpel and Noah~A. Smith.
\newblock Softmax-margin {CRFs}: Training log-linear models with cost
  functions.
\newblock In \emph{Proceedings of NAACL}, pp.\  733--736, Los Angeles,
  California, June 2010.

\bibitem[Hochreiter \& Schmidhuber(1997)Hochreiter and
  Schmidhuber]{hochreiter97lstm}
Sepp Hochreiter and J{\"{u}}rgen Schmidhuber.
\newblock Long short-term memory.
\newblock \emph{Neural Computation}, 9\penalty0 (8):\penalty0 1735--1780, 1997.

\bibitem[Karpathy \& Li(2015)Karpathy and Li]{karpathy15imagecaption}
Andrej Karpathy and Fei{-}Fei Li.
\newblock Deep visual-semantic alignments for generating image descriptions.
\newblock In \emph{Proceedings of {CVPR}}, pp.\  3128--3137, Boston, MA, USA,
  June 2015.

\bibitem[Kumar \& Byrne(2004)Kumar and Byrne]{kumar2004minimum}
Shankar Kumar and William Byrne.
\newblock Minimum bayes-risk decoding for statistical machine translation.
\newblock \emph{Technical Report}, 2004.

\bibitem[Lafferty et~al.(2001)Lafferty, McCallum, Pereira,
  et~al.]{lafferty2001conditional}
John Lafferty, Andrew McCallum, Fernando Pereira, et~al.
\newblock Conditional random fields: Probabilistic models for segmenting and
  labeling sequence data.
\newblock In \emph{Proceedings of ICML}, volume~1, pp.\  282--289, San
  Francisco, California, 2001.

\bibitem[Li et~al.(2017)Li, Monroe, and Jurafsky]{li16decode}
Jiwei Li, Will Monroe, and Dan Jurafsky.
\newblock Learning to decode for future success.
\newblock \emph{CoRR}, abs/1701.06549, 2017.

\bibitem[Liu et~al.(2016)Liu, Zhu, Ye, Guadarrama, and Murphy]{LiuZYG016}
Siqi Liu, Zhenhai Zhu, Ning Ye, Sergio Guadarrama, and Kevin Murphy.
\newblock Optimization of image description metrics using policy gradient
  methods.
\newblock \emph{CoRR}, abs/1612.00370, 2016.

\bibitem[Luong et~al.(2015)Luong, Pham, and Manning]{luong15nmt}
Thang Luong, Hieu Pham, and Christopher~D. Manning.
\newblock Effective approaches to attention-based neural machine translation.
\newblock In \emph{Proceedings of EMNLP}, pp.\  1412--1421, Lisbon, Portugal,
  2015.

\bibitem[Ma \& Hovy(2016)Ma and Hovy]{ma-hovy:2016:P16-1}
Xuezhe Ma and Eduard Hovy.
\newblock End-to-end sequence labeling via bi-directional {LSTM-CNNs-CRF}.
\newblock In \emph{Proceedings of ACL}, pp.\  1064--1074, Berlin, Germany,
  August 2016.

\bibitem[Ma \& Zhao(2012)Ma and Zhao]{ma2015probabilistic}
Xuezhe Ma and Hai Zhao.
\newblock Probabilistic models for high-order projective dependency parsing.
\newblock \emph{Technical Report, arXiv:1502.04174}, 2012.

\bibitem[Marcus et~al.(1993)Marcus, Santorini, and Marcinkiewicz]{Marcus:1993}
Mitchell Marcus, Beatrice Santorini, and Mary~Ann Marcinkiewicz.
\newblock Building a large annotated corpus of {English}: the {Penn Treebank}.
\newblock \emph{Computational Linguistics}, 19\penalty0 (2):\penalty0 313--330,
  1993.

\bibitem[McDonald et~al.(2005)McDonald, Crammer, and Pereira]{McDonald:2005}
Ryan McDonald, Koby Crammer, and Fernando Pereira.
\newblock Online large-margin training of dependency parsers.
\newblock In \emph{Proceedings of ACL}, pp.\  91--98, Ann Arbor, Michigan, June
  25-30 2005.

\bibitem[Nachum et~al.(2016)Nachum, Norouzi, and
  Schuurmans]{nachum2016improving}
Ofir Nachum, Mohammad Norouzi, and Dale Schuurmans.
\newblock Improving policy gradient by exploring under-appreciated rewards.
\newblock \emph{arXiv preprint arXiv:1611.09321}, 2016.

\bibitem[Norouzi et~al.(2016)Norouzi, Bengio, Jaitly, Schuster, Wu, Schuurmans,
  et~al.]{norouzi2016reward}
Mohammad Norouzi, Samy Bengio, Navdeep Jaitly, Mike Schuster, Yonghui Wu, Dale
  Schuurmans, et~al.
\newblock Reward augmented maximum likelihood for neural structured prediction.
\newblock In \emph{Proceedings of NIPS}, pp.\  1723--1731, Barcelona, Spain,
  2016.

\bibitem[Paskin(2001)]{paskin2001cubic}
Mark~A Paskin.
\newblock \emph{Cubic-time parsing and learning algorithms for grammatical
  bigram models}.
\newblock Citeseer, 2001.

\bibitem[Ranzato et~al.(2016)Ranzato, Chopra, Auli, and
  Zaremba]{ranzato15mixer}
Marc'Aurelio Ranzato, Sumit Chopra, Michael Auli, and Wojciech Zaremba.
\newblock Sequence level training with recurrent neural networks.
\newblock In \emph{Proceedings of ICLR}, San Juan, Puerto Rico, 2016.

\bibitem[Shen et~al.(2016)Shen, Cheng, He, He, Wu, Sun, and
  Liu]{shen-EtAl:2016:P16-1}
Shiqi Shen, Yong Cheng, Zhongjun He, Wei He, Hua Wu, Maosong Sun, and Yang Liu.
\newblock Minimum risk training for neural machine translation.
\newblock In \emph{Proceedings of ACL}, pp.\  1683--1692, Berlin, Germany,
  August 2016.

\bibitem[Simonyan \& Zisserman(2015)Simonyan and Zisserman]{simonyan2014very}
Karen Simonyan and Andrew Zisserman.
\newblock Very deep convolutional networks for large-scale image recognition.
\newblock In \emph{Proceedings of ICLR}, San Diego, California, 2015.

\bibitem[Sutskever et~al.(2014)Sutskever, Vinyals, and Le]{NIPS2014_5346}
Ilya Sutskever, Oriol Vinyals, and Quoc~V Le.
\newblock Sequence to sequence learning with neural networks.
\newblock In \emph{Proceedings of NIPS}, pp.\  3104--3112, Montreal, Canada,
  2014.

\bibitem[Taskar et~al.(2004)Taskar, Guestrin, and Koller]{roller2004max}
Ben Taskar, Carlos Guestrin, and Daphne Koller.
\newblock Max-margin markov networks.
\newblock \emph{Advances in neural information processing systems},
  16:\penalty0 25, 2004.

\bibitem[Tjong~Kim et~al.(2003)Tjong~Kim, F., and
  De~Meulder]{TjongKimSang:2003}
Sang Tjong~Kim, Erik F., and Fien De~Meulder.
\newblock Introduction to the conll-2003 shared task: Language-independent
  named entity recognition.
\newblock In \emph{Proceedings of CoNLL-2003 - Volume 4}, pp.\  142--147,
  Edmonton, Canada, 2003.

\bibitem[Volkovs et~al.(2011)Volkovs, Larochelle, and Zemel]{volkovs2011loss}
Maksims~N Volkovs, Hugo Larochelle, and Richard~S Zemel.
\newblock Loss-sensitive training of probabilistic conditional random fields.
\newblock \emph{arXiv preprint arXiv:1107.1805}, 2011.

\bibitem[Wallach(2004)]{wallach2004conditional}
Hanna~M Wallach.
\newblock Conditional random fields: An introduction.
\newblock 2004.

\bibitem[Wasserman(2013)]{wasserman2013all}
Larry Wasserman.
\newblock \emph{All of statistics: a concise course in statistical inference}.
\newblock Springer Science \& Business Media, 2013.

\bibitem[Wiseman \& Rush(2016)Wiseman and Rush]{wiseman16beamsearch}
Sam Wiseman and Alexander~M. Rush.
\newblock Sequence-to-sequence learning as beam-search optimization.
\newblock In \emph{Proceedings of EMNLP}, pp.\  1296--1306, Austin, Texas,
  2016.

\bibitem[Wu et~al.(2016)Wu, Schuster, Chen, Le, Norouzi, Macherey, Krikun, Cao,
  Gao, Macherey, et~al.]{wu2016google}
Yonghui Wu, Mike Schuster, Zhifeng Chen, Quoc~V Le, Mohammad Norouzi, Wolfgang
  Macherey, Maxim Krikun, Yuan Cao, Qin Gao, Klaus Macherey, et~al.
\newblock Google's neural machine translation system: Bridging the gap between
  human and machine translation.
\newblock \emph{arXiv preprint arXiv:1609.08144}, 2016.

\bibitem[Yang et~al.(2016)Yang, Yuan, Wu, Cohen, and
  Salakhutdinov]{yang2016review}
Zhilin Yang, Ye~Yuan, Yuexin Wu, William~W Cohen, and Ruslan~R Salakhutdinov.
\newblock Review networks for caption generation.
\newblock In \emph{Proceedings of {NIPS}}, pp.\  2361--2369, 2016.

\end{thebibliography}

\newpage
\section*{Appendix: Softmax Q-Distribution Estimation for Structured Prediction: A Theoretical Interpretation for RAML}
\appendix
\setcounter{equation}{0}

\section{Softmax Q-distribution Maximum Likelihood}
\subsection{Proof of Theorem 1}\label{appendix:thm1}
\begin{proof}
Since the reward function is bounded $0 \leq r(y, y^*) \leq R, \forall y, y* \in \mathcal{Y}$, we have:
\begin{displaymath}
1 \leq \exp (r(y, y^*)/\tau) \leq e^{R/\tau}
\end{displaymath}
Then,
\begin{equation}\label{eq:bound:q}
\frac{1}{|\mathcal{Y}|e^{R/\tau}} < \frac{1}{1 + (|\mathcal{Y}| - 1) e^{R/\tau}} \leq \frac{\exp (r(y, y^*)/\tau)}{\sum\limits_{y'\in \mathcal{Y}} \exp (r(y', y^*)/\tau)} \leq \frac{e^{R/\tau}}{|\mathcal{Y}| - 1 + e^{R/\tau}} < \frac{e^{R/\tau}}{|\mathcal{Y}|}
\end{equation}
Now we can bound the conditional distribution $Q(z|x)$ and $Q'(z|x)$:
\begin{equation}\label{eq:bound:Q}
\frac{1}{|\mathcal{Y}|e^{R/\tau}} < Q(Z=z|X=x; \tau) = \frac{\exp \left( \mathbb{E}_P[r(z, Y)|X=x]/\tau\right)}{\sum\limits_{y\in \mathcal{Y}} \exp \left( \mathbb{E}_P[r(y, Y)|X=x]/\tau\right)} < \frac{e^{R/\tau}}{|\mathcal{Y}|}
\end{equation}
and,
\begin{equation}\label{eq:bound:Q'}
\frac{1}{|\mathcal{Y}|e^{R/\tau}} < Q'(Z=z|X=x; \tau) = \mathbb{E}_P \left[\frac{\exp \left( r(z, Y)/\tau\right)}{\sum\limits_{y\in \mathcal{Y}} \exp \left( r(y, Y)/\tau\right)} \Big| X=x \right] < \frac{e^{R/\tau}}{|\mathcal{Y}|}
\end{equation}
Thus, $\forall x \in \mathcal{X}, z\in \mathcal{Y}$, 
\begin{displaymath}
\log \frac{Q(z|x)}{Q'(z|x)} < 2 R/\tau
\end{displaymath}
To sum up, we have:
\begin{displaymath}
\begin{array}{rl}
\mathrm{KL}(Q(\cdot, \cdot)\|Q'(\cdot, \cdot)) = & \sum\limits_{x\in\mathcal{X}} Q(x) \sum\limits_{z\in\mathcal{Y}} Q(z|x) \log \frac{Q(z|x) Q(x)}{Q'(z|x) Q'(x)} \\
= & \sum\limits_{x\in\mathcal{X}} Q(x) \sum\limits_{z\in\mathcal{Y}} Q(z|x) \log \frac{Q(z|x)}{Q'(z|x)} \\
< & \sum\limits_{x\in\mathcal{X}} Q(x) \sum\limits_{z\in\mathcal{Y}} Q(z|x) 2 R /\tau \\
= & 2 R /\tau
\end{array}
\end{displaymath}
\end{proof}

\subsection{Proof of Theorem 2}\label{appendix:thm2}
\begin{lemma}\label{lem:tail}
For every $x \in \mathcal{X}$, 
\begin{displaymath}
P(Y\neq y^*|X=x) \leq e^{-cb}
\end{displaymath}
where $b\stackrel{\Delta}{=} \frac{\gamma^2}{(1-\gamma)^2}$.
\end{lemma}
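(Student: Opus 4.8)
The plan is to reduce the lemma to a single application of the exponential tail bound~\eqref{eq:subgaussian}. First I would invoke the separation assumption $r(y, y) - r(y', y) \geq \gamma R$ with the particular choice $y = y^*$, where $y^*$ denotes the unique output associated with the given $x$ by the sub-Gaussian hypothesis. This gives $r(y^*, y^*) - r(y', y^*) \geq \gamma R$ for every $y' \neq y^*$, which is exactly the statement that, conditionally on $X = x$, the event $\{Y \neq y^*\}$ is contained in the event $\{\, r(y^*, y^*) - r(Y, y^*) \geq \gamma R \,\}$: whenever $Y$ takes any value other than $y^*$, the reward gap is at least $\gamma R$.

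Next I would use monotonicity of probability under this inclusion together with the tail bound at $t = \gamma$, which is legitimate since $\gamma \in (0, 1) \subseteq [0, 1)$. Concretely,
\[
P(Y \neq y^* \mid X = x) \;\leq\; P\big(r(y^*, y^*) - r(Y, y^*) \geq \gamma R \mid X = x\big) \;\leq\; e^{-c \frac{\gamma^2}{(1-\gamma)^2}} \;=\; e^{-cb},
\]
which is precisely the claimed bound, with $b = \gamma^2 / (1-\gamma)^2$ as defined.

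There is essentially no obstacle here: the only points requiring care are (i) identifying the $y^*$ in the lemma statement with the unique output furnished by the tail-bound hypothesis for each fixed $x$, and (ii) verifying that $t = \gamma$ falls in the admissible range $[0,1)$, which is immediate from the assumption $\gamma \in (0,1)$. I expect this lemma to act as the workhorse in the proof of Theorem~\ref{thm:approx2}, controlling the probability mass that the data distribution — and hence $Q$ and $Q'$ — places away from $y^*$ as $\tau \to 0$.
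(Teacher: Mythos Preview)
Your proposal is correct and matches the paper's proof essentially line for line: both identify $\{Y\neq y^*\}$ with the reward-gap event $\{r(y^*,y^*)-r(Y,y^*)\geq \gamma R\}$ via the margin assumption and then apply the tail bound~\eqref{eq:subgaussian} at $t=\gamma$. The paper in fact writes the first step as an equality (the reverse inclusion also holds since $\gamma R>0$ forces $Y\neq y^*$), whereas you use only the inclusion needed for the upper bound; either is fine.
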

\begin{proof}
From the assumption in Theorem~\ref{thm:approx2} of Eq.~\eqref{eq:subgaussian} in \S\ref{subsec:analysis}, we have
\begin{displaymath}
P(Y\neq y^*|X=x) = P(r(y^*, y^*) - r(Y, y^*) \geq \gamma R) \leq e^{-c\frac{\gamma^2}{(1-\gamma)^2}} = e^{-cb}
\end{displaymath}
\end{proof}

\begin{lemma}\label{lem:bound:q}
\begin{displaymath}
\begin{array}{ccccll}
\frac{1}{1 + (|\mathcal{Y}| - 1)e^{R/\tau}} & \leq & q(y|y^*) & \leq & \frac{1}{e^{\gamma R/\tau} + (|\mathcal{Y}| - 1)e^{-R/\tau}} & \textrm{, if } y \neq y^* \\
\frac{1}{1 + (|\mathcal{Y}| - 1)e^{-\gamma R/\tau}} & \leq & q(y|y^*) & \leq & \frac{1}{1 + (|\mathcal{Y}| - 1)e^{-R/\tau}} & \textrm{, if } y = y^*
\end{array}
\end{displaymath}
\end{lemma}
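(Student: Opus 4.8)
The plan is to pass to the reciprocal of the payoff distribution. Writing out the definition \eqref{eq:payoff} and cancelling the numerator against the normalizer gives, for every $y, y^* \in \mathcal{Y}$,
\[
q(y|y^*;\tau)^{-1} = \sum_{y'\in\mathcal{Y}} \exp\!\big((r(y',y^*) - r(y,y^*))/\tau\big),
\]
so it suffices to bound this sum above and below and then invert. Two ingredients are available for each summand: the global bound $0 \le r \le R$, which yields $-R \le r(y',y^*) - r(y,y^*) \le R$ for all $y'$; and the margin hypothesis of Theorem~\ref{thm:approx2}, applied with $y^*$ in place of $y$, namely $r(y^*,y^*) - r(y',y^*) \ge \gamma R$ for every $y' \neq y^*$.

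First I would treat the case $y \neq y^*$. For the lower bound on $q$ I single out the summand $y'=y$, which contributes exactly $1$, and bound each of the remaining $|\mathcal{Y}|-1$ exponents by $R$, obtaining $q(y|y^*)^{-1} \le 1 + (|\mathcal{Y}|-1)e^{R/\tau}$. For the upper bound on $q$ I instead single out the summand $y'=y^*$, whose exponent is $\ge \gamma R$ by the margin hypothesis, and bound each of the other $|\mathcal{Y}|-1$ exponents (the $y'=y$ term being $0$, the rest being $\ge -R$) from below by $-R$, which gives $q(y|y^*)^{-1} \ge e^{\gamma R/\tau} + (|\mathcal{Y}|-1)e^{-R/\tau}$. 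Inverting these two inequalities yields the first line of the lemma.

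The case $y = y^*$ is a direct specialization: the summand $y'=y^*$ equals $1$, and each of the remaining $|\mathcal{Y}|-1$ summands has exponent $r(y',y^*) - r(y^*,y^*) \in [-R, -\gamma R]$ — the lower endpoint from the global bound, the upper endpoint from the margin hypothesis. Bounding this exponent by $-R$ and by $-\gamma R$ respectively gives $1 + (|\mathcal{Y}|-1)e^{-R/\tau} \le q(y^*|y^*)^{-1} \le 1 + (|\mathcal{Y}|-1)e^{-\gamma R/\tau}$, and inverting yields the second line.

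The whole argument is elementary term-by-term estimation, and I anticipate no real obstacle — only the bookkeeping of which summand plays the distinguished role ($y'=y^*$ when we want to exploit the margin $\gamma R$, and $y'=y$ otherwise) and of consistently grouping the remaining terms as $|\mathcal{Y}|-1$ of them, absorbing the trivial $e^0$ contribution into that count via $1 \ge e^{-R/\tau}$. This estimate is then fed, together with Lemma~\ref{lem:tail}, into the $\tau \to 0$ analysis of $\mathrm{KL}(Q(\cdot,\cdot)\|Q'(\cdot,\cdot))$ in the proof of Theorem~\ref{thm:approx2}.
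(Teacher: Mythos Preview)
Your proposal is correct and follows essentially the same approach as the paper: both arguments reduce to elementary term-by-term bounding of the exponents in the payoff distribution, singling out the $y'=y^*$ summand to invoke the margin $\gamma R$ and lumping the remaining $|\mathcal{Y}|-1$ terms via the global bound $0\le r\le R$. The only cosmetic difference is that you work uniformly with the reciprocal $q^{-1}$, whereas the paper sometimes manipulates the fraction directly and cites the earlier inequality~\eqref{eq:bound:q} for the coarse bounds; the substance is identical.
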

\begin{proof}
From Eq.~\eqref{eq:bound:q}, we have 
\begin{displaymath}
\frac{1}{1 + (|\mathcal{Y}| - 1)e^{R/\tau}} \leq q(y|y^*) \leq \frac{1}{1 + (|\mathcal{Y}| - 1)e^{-R/\tau}}
\end{displaymath}
If $y\neq y^*$, 
\begin{displaymath}
q(y|y^*) = \frac{\exp (r(y, y^*)/\tau)}{\sum\limits_{y'\in \mathcal{Y}} \exp (r(y', y^*)/\tau)} \leq \frac{e^{r(y, y^*)/\tau}}{e^{r(y^*, y^*)/\tau} + (|\mathcal{Y}| - 1)} \leq \frac{1}{e^{\gamma R/\tau} + (|\mathcal{Y}| - 1)e^{-R/\tau}}
\end{displaymath}
If $y = y^*$, 
\begin{displaymath}
q(y|y^*) = q(y^*|y^*) = \frac{1}{1 + \sum\limits_{y'\neq y^*} e^{(r(y', y^*) - r(y^*, y^*))/\tau}} \leq \frac{1}{1 + (|\mathcal{Y}| - 1)e^{-\gamma R/\tau}}
\end{displaymath}
\end{proof}

\begin{lemma}\label{lem:bound:Q'}
\begin{displaymath}
\begin{array}{ccccll}
\frac{1}{1 + (|\mathcal{Y}| - 1)e^{R/\tau}} & \leq & Q'(z|x) & \leq & \frac{1}{e^{\gamma R/\tau} + (|\mathcal{Y}| - 1)e^{-R/\tau}} + \frac{e^{-cb}}{1 + (|\mathcal{Y}| - 1)e^{-R/\tau}} & \textrm{, if } z \neq y^* \\
\frac{1-e^{-cb}}{1 + (|\mathcal{Y}| - 1)e^{-\gamma R/\tau}} & \leq & Q'(z|x) & \leq & \frac{1}{1 + (|\mathcal{Y}| - 1)e^{-R/\tau}} & \textrm{, if } y = y^*
\end{array}
\end{displaymath}
\end{lemma}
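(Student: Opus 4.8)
The plan is to observe that $Q'(z|x)=\mathbb{E}_{P(Y|X=x)}\bigl[q(z|Y;\tau)\bigr]$ is simply the $P(\cdot|X=x)$-average of the payoff probabilities that Lemma~\ref{lem:bound:q} and Eq.~\eqref{eq:bound:q} already control, and to split this expectation into the contribution of the distinguished mode $y^*$ (whose mass is bounded below via Lemma~\ref{lem:tail}) and the residual mass on $\{Y\neq y^*\}$ (bounded above by $e^{-cb}$, again by Lemma~\ref{lem:tail}). So first I would write
\[
Q'(z|x)\;=\;P(Y=y^*|x)\,q(z|y^*;\tau)\;+\;\sum_{y\neq y^*}P(Y=y|x)\,q(z|y;\tau),
\]
and then bound each piece with whichever branch of Lemma~\ref{lem:bound:q} applies, keeping track of whether the ``prediction'' $z$ coincides with the ``reference'' (namely $y^*$ in the first term and $y$ in the second).

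For the case $z=y^*$ I would argue as follows. For the upper bound, note that for \emph{every} $Y$ one has $q(y^*|Y;\tau)\le\frac{1}{1+(|\mathcal{Y}|-1)e^{-R/\tau}}$: if $Y=y^*$ this is the second branch of Lemma~\ref{lem:bound:q}, and if $Y\neq y^*$ it is the first branch together with $e^{\gamma R/\tau}\ge1$; hence the same bound survives taking the expectation. For the lower bound, discard the nonnegative $Y\neq y^*$ terms, keep $P(Y=y^*|x)\,q(y^*|y^*;\tau)$, and substitute $P(Y=y^*|x)\ge1-e^{-cb}$ (Lemma~\ref{lem:tail}) and $q(y^*|y^*;\tau)\ge\frac{1}{1+(|\mathcal{Y}|-1)e^{-\gamma R/\tau}}$ (second branch of Lemma~\ref{lem:bound:q}), which yields exactly $\frac{1-e^{-cb}}{1+(|\mathcal{Y}|-1)e^{-\gamma R/\tau}}$.

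For the case $z\neq y^*$ the lower bound is immediate: the universal bound $q(z|Y;\tau)\ge\frac{1}{1+(|\mathcal{Y}|-1)e^{R/\tau}}$ from Eq.~\eqref{eq:bound:q} holds pointwise in $Y$ and therefore persists under expectation. For the upper bound I would treat the two halves of the split separately. The $Y=y^*$ half is at most $q(z|y^*;\tau)\le\frac{1}{e^{\gamma R/\tau}+(|\mathcal{Y}|-1)e^{-R/\tau}}$ by the first branch of Lemma~\ref{lem:bound:q} (since $z\neq y^*$), after bounding $P(Y=y^*|x)\le1$. The residual half $\sum_{y\neq y^*}P(Y=y|x)\,q(z|y;\tau)$ is at most $e^{-cb}\cdot\max_{y}q(z|y;\tau)\le\frac{e^{-cb}}{1+(|\mathcal{Y}|-1)e^{-R/\tau}}$, using $\sum_{y\neq y^*}P(Y=y|x)=P(Y\neq y^*|x)\le e^{-cb}$ (Lemma~\ref{lem:tail}) and the universal upper bound on $q$ in Eq.~\eqref{eq:bound:q}. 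Adding the two contributions gives the asserted bound.

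The proof is essentially bookkeeping, so I do not expect a genuine obstacle; the one spot that needs care is the $z\neq y^*$ upper bound, where for the residual terms we cannot assume $z\neq y$ and therefore must fall back on the weaker universal bound $\frac{1}{1+(|\mathcal{Y}|-1)e^{-R/\tau}}$ instead of the sharper $\frac{1}{e^{\gamma R/\tau}+(|\mathcal{Y}|-1)e^{-R/\tau}}$; relatedly, one must be careful to pick the correct branch of Lemma~\ref{lem:bound:q} according to which argument of $q(\cdot|\cdot;\tau)$ plays the role of the reference. No hypotheses beyond those of Theorem~\ref{thm:approx2} are used, as everything reduces to Lemma~\ref{lem:tail}, Lemma~\ref{lem:bound:q}, and Eq.~\eqref{eq:bound:q}.
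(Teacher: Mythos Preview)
Your proposal is correct and follows essentially the same approach as the paper: both write $Q'(z|x)=\sum_y P(y|x)\,q(z|y)$, split off the $y=y^*$ term, and then feed in the bounds from Lemma~\ref{lem:tail}, Lemma~\ref{lem:bound:q}, and Eq.~\eqref{eq:bound:q} exactly as you describe. Your write-up is in fact a bit more explicit than the paper's (particularly in justifying the $z=y^*$ upper bound and in noting why the residual sum for $z\neq y^*$ must use the universal bound on $q$), but the argument is the same.
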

\begin{proof}
\begin{displaymath}
Q'(z|x) = \mathrm{E}_{p}[q(z|Y)] = \sum\limits_{y\in \mathcal{Y}} p(y|x) q(z|y)
\end{displaymath}
From Eq.~\eqref{eq:bound:Q'} we have,
\begin{displaymath}
\frac{1}{1 + (|\mathcal{Y}| - 1)e^{R/\tau}} \leq Q'(z|x) \leq \frac{1}{1 + (|\mathcal{Y}| - 1)e^{-R/\tau}}
\end{displaymath}
\end{proof}
If $z\neq y^*$, 
\begin{displaymath}
Q'(z|x) = p(y^*|x)q(z|y^*)+\sum\limits_{y\neq y^*}p(y|x)q(z|y) \leq q(z|y^*) + \frac{1}{1 + (|\mathcal{Y}| - 1)e^{-R/\tau}} P(Y \neq y^*|X=x)
\end{displaymath}
From Lemma~\ref{lem:tail} and Lemma~\ref{lem:bound:q}, 
\begin{displaymath}
Q'(z|x) \leq \frac{1}{e^{\gamma R/\tau} + (|\mathcal{Y}| - 1)e^{-R/\tau}} + \frac{e^{-cb}}{1 + (|\mathcal{Y}| - 1)e^{-R/\tau}}
\end{displaymath}
If $z = y^*$, 
\begin{displaymath}
Q'(z|x) = p(y^*|x)q(z|y^*)+\sum\limits_{y\neq y^*}p(y|x)q(z|y) \geq p(y^*|x)q(z|y^*)
\end{displaymath}
From Lemma~\ref{lem:tail} and Lemma~\ref{lem:bound:q}, 
\begin{displaymath}
Q'(z|x) \geq \frac{1-e^{-cb}}{1 + (|\mathcal{Y}| - 1)e^{-\gamma R/\tau}} 
\end{displaymath}

\begin{lemma}\label{lem:bound:er}
\begin{displaymath}
\begin{array}{rcccll}
0 & \leq & \mathrm{E}[r(z, Y)/\tau] & \leq & P(y^*|x) r(z, y^*)/\tau + e^{-cb} R/\tau & \textrm{, if } z \neq y^* \\
P(y^*|x) r(y^*, y^*)/\tau & \leq & \mathrm{E}[r(z, Y)/\tau] & \leq & R/\tau & \textrm{, if } z = y^*
\end{array}
\end{displaymath}
\end{lemma}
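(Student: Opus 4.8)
The plan is to prove all four displayed inequalities by the same two-step recipe: split the conditional reward $\mathbb{E}_{P(Y|X=x)}[r(z,Y)]$ according to whether the sampled reference $Y$ equals the designated point $y^{*}=y^{*}(x)$ of the tail hypothesis, and then control the two resulting pieces using only the boundedness $0\le r\le R$ and the tail estimate $P(Y\ne y^{*}\mid X=x)\le e^{-cb}$ of Lemma~\ref{lem:tail}. Since $\tau>0$ is a fixed constant, the factor $1/\tau$ in the statement is cosmetic; I would first establish the bounds for $\mathbb{E}_{P(Y|X=x)}[r(z,Y)]$ (the conditional reward $R(z|x)$) and divide by $\tau$ at the end.

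Concretely, I would write $\mathbb{E}_{P(Y|X=x)}[r(z,Y)] = P(y^{*}|x)\,r(z,y^{*}) + \sum_{y\ne y^{*}}P(y|x)\,r(z,y)$, isolating the $y=y^{*}$ summand. For $z\ne y^{*}$, the lower bound $\mathbb{E}_{P(Y|X=x)}[r(z,Y)]\ge 0$ is immediate from $r\ge 0$; for the upper bound I would bound the residual sum by $R\sum_{y\ne y^{*}}P(y|x)=R\,P(Y\ne y^{*}\mid X=x)\le R\,e^{-cb}$ using $r\le R$ and Lemma~\ref{lem:tail}, then add back $P(y^{*}|x)\,r(z,y^{*})$. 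For $z=y^{*}$, the upper bound $\mathbb{E}_{P(Y|X=x)}[r(y^{*},Y)]\le R$ is immediate from $r\le R$ and $\sum_{y}P(y|x)=1$, while the lower bound follows by discarding the nonnegative residual sum, leaving $\mathbb{E}_{P(Y|X=x)}[r(y^{*},Y)]\ge P(y^{*}|x)\,r(y^{*},y^{*})$. Dividing each inequality by $\tau$ yields the statement verbatim.

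There is no genuine obstacle here: this is a bookkeeping lemma, and the only real decision is which summand to peel off. The one point worth emphasizing is that in the $z\ne y^{*}$ upper bound one should charge the off-$y^{*}$ mass the \emph{sharp} factor $R\,e^{-cb}$ supplied by Lemma~\ref{lem:tail}, rather than a crude $|\mathcal{Y}|$- or $R$-level estimate. This is precisely what lets the conditional reward of a wrong $z$ sit strictly below $P(y^{*}|x)\,r(y^{*},y^{*})$ once the margin condition $r(y^{*},y^{*})-r(z,y^{*})\ge\gamma R$ is invoked, so that the softmax Q-distribution in~\eqref{eq:softmax-q} concentrates on $y^{*}$ as $\tau\to 0$ --- which is exactly what the subsequent proof of Theorem~\ref{thm:approx2} requires.
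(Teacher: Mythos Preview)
Your proposal is correct and follows essentially the same argument as the paper: split $\mathbb{E}_{P(Y|X=x)}[r(z,Y)]$ into the $y=y^{*}$ term and the residual, then use $0\le r\le R$ together with Lemma~\ref{lem:tail} to bound the residual by $R\,e^{-cb}$ (for the $z\ne y^{*}$ upper bound) or discard it (for the $z=y^{*}$ lower bound). The only cosmetic difference is that the paper carries the $1/\tau$ factor throughout rather than dividing at the end.
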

\begin{proof}
\begin{displaymath}
\mathrm{E}[r(z, Y)/\tau] = \sum\limits_{y \in \mathcal{Y}} P(y|x) r(z, y) /\tau
\end{displaymath}
Since for every $y, y' \in \mathcal{Y}$, $0 \leq r(y, y') \leq R$, we have 
\begin{displaymath}
0 \leq \mathrm{E}[r(z, Y)/\tau] \leq R/\tau
\end{displaymath}
If $z\neq y^*$, 
\begin{displaymath}
\mathrm{E}[r(z, Y)/\tau] = P(y^*|x)r(z, y^*)/\tau + \sum\limits_{y\neq y^*} P(y|x) r(z, y)/\tau \leq P(y^*|x)r(z, y^*)/\tau + e^{-cb} R/\tau
\end{displaymath}
If $z= y^*$, 
\begin{displaymath}
\mathrm{E}[r(z, Y)/\tau] = P(y^*|x)r(z, y^*)/\tau + \sum\limits_{y\neq y^*} P(y|x) r(z, y)/\tau \geq P(y^*|x)r(y^*, y^*)/\tau
\end{displaymath}
\end{proof}

\begin{lemma}\label{lem:bound:Q}
\begin{displaymath}
\begin{array}{ccccll}
\frac{1}{1 + (|\mathcal{Y}| - 1)e^{R/\tau}} & \leq & Q(z|x) & \leq & \frac{1}{e^{\alpha R/\tau} + (|\mathcal{Y}| - 1)e^{-R/\tau}} & \textrm{, if } z \neq y^* \\
\frac{1}{1 + (|\mathcal{Y}| - 1)e^{-\alpha R/\tau}} & \leq & Q(z|x) & \leq & \frac{1}{1 + (|\mathcal{Y}| - 1)e^{-R/\tau}} & \textrm{, if } y = y^*
\end{array}
\end{displaymath}
where $\alpha=\gamma - (1 + \gamma) e^{-cb} < \gamma$.
\end{lemma}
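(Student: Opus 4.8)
The plan is to derive the four inequalities in two stages, mirroring the proofs of Lemma~\ref{lem:bound:q} and Lemma~\ref{lem:bound:Q'}. First I would record a crude two-sided bound valid for every $z$: since $0 \le r \le R$ we have $0 \le \mathbb{E}_{P(Y|X=x)}[r(z,Y)] \le R$, hence $1 \le \exp(\mathbb{E}[r(z,Y)]/\tau) \le e^{R/\tau}$, and substituting these into the numerator and denominator of $Q(z|x)$ in~\eqref{eq:softmax-q} gives $\tfrac{1}{1+(|\mathcal{Y}|-1)e^{R/\tau}} \le Q(z|x) \le \tfrac{1}{1+(|\mathcal{Y}|-1)e^{-R/\tau}}$ for all $z \in \mathcal{Y}$, exactly as in the derivation of~\eqref{eq:bound:Q}. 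This already yields the lower bound in the $z \neq y^*$ row and the upper bound in the $z = y^*$ row of the lemma.

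The heart of the argument is controlling the exponent gap $\mathbb{E}[r(y,Y)]/\tau - \mathbb{E}[r(z,Y)]/\tau$ using Lemma~\ref{lem:bound:er}, the margin hypothesis $r(y^*,y^*) - r(w,y^*) \ge \gamma R$ for $w \neq y^*$, and Lemma~\ref{lem:tail} in the form $P(y^*|x) \ge 1 - e^{-cb}$. For any $w \neq y^*$, pair the upper estimate $\mathbb{E}[r(w,Y)]/\tau \le P(y^*|x) r(w,y^*)/\tau + e^{-cb} R/\tau$ with the lower estimate $\mathbb{E}[r(y^*,Y)]/\tau \ge P(y^*|x) r(y^*,y^*)/\tau$ and subtract: the margin makes the $P(y^*|x)$-coefficient at least $\gamma R$, and bounding $P(y^*|x) \ge 1 - e^{-cb}$ gives $\mathbb{E}[r(y^*,Y)]/\tau - \mathbb{E}[r(w,Y)]/\tau \ge (1-e^{-cb})\gamma R/\tau - e^{-cb} R/\tau = \alpha R/\tau$ with $\alpha = \gamma - (1+\gamma)e^{-cb}$; by the symmetric pairing, $\mathbb{E}[r(w,Y)]/\tau - \mathbb{E}[r(y^*,Y)]/\tau \le -\alpha R/\tau$.

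With this in hand the two remaining bounds follow. For $z = y^*$, write $Q(y^*|x) = 1/\big(1 + \sum_{y \neq y^*} \exp((\mathbb{E}[r(y,Y)] - \mathbb{E}[r(y^*,Y)])/\tau)\big)$ and substitute the exponent bound $\le -\alpha R/\tau$ to obtain $Q(y^*|x) \ge 1/(1 + (|\mathcal{Y}|-1)e^{-\alpha R/\tau})$. For $z \neq y^*$, write $Q(z|x) = 1/\big(\sum_y \exp((\mathbb{E}[r(y,Y)] - \mathbb{E}[r(z,Y)])/\tau)\big)$; lower-bound the $y = z$ term by $1$, the $y = y^*$ term by $e^{\alpha R/\tau}$ (the same computation applied to $z$ and $y^*$), and each of the remaining $|\mathcal{Y}|-2$ terms by $e^{-R/\tau}$ via the crude bound. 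Since $1 \ge e^{-R/\tau}$ this gives $Q(z|x) \le 1/(1 + e^{\alpha R/\tau} + (|\mathcal{Y}|-2)e^{-R/\tau}) \le 1/(e^{\alpha R/\tau} + (|\mathcal{Y}|-1)e^{-R/\tau})$, as claimed.

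The main obstacle I anticipate is bookkeeping rather than any conceptual difficulty: one must pair the ``$\le$'' and ``$\ge$'' halves of Lemma~\ref{lem:bound:er} so that the factor $P(y^*|x)$ always multiplies a quantity of definite sign, so that replacing $P(y^*|x)$ by $1-e^{-cb}$ is monotone in the correct direction, and then carry the leftover $e^{-cb} R/\tau$ term through to land on exactly $\alpha R/\tau$. The final use of $1 \ge e^{-R/\tau}$ to convert $1 + (|\mathcal{Y}|-2)e^{-R/\tau}$ into the stated denominator is the only other step that is not purely mechanical.
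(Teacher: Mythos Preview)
Your proposal is correct and follows essentially the same route as the paper: both obtain the outer two bounds from the crude estimate $0\le \mathbb{E}[r(\cdot,Y)]\le R$, and both obtain the inner two by combining the upper and lower halves of Lemma~\ref{lem:bound:er} with the margin $r(y^*,y^*)-r(w,y^*)\ge \gamma R$ and Lemma~\ref{lem:tail} to produce the key exponent gap $\mathbb{E}[r(y^*,Y)]/\tau-\mathbb{E}[r(w,Y)]/\tau\ge \alpha R/\tau$. The only cosmetic difference is that for $z\neq y^*$ you isolate the $y=z$ summand (equal to $1$) separately and then relax via $1\ge e^{-R/\tau}$, whereas the paper folds all $y\neq y^*$ terms together from the start; your version is in fact a hair sharper before the final relaxation, and also avoids a couple of apparent typos in the paper's displayed chain.
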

\begin{proof}
\begin{displaymath}
Q(z|x) = \frac{e^{\mathrm{E}[r(z, Y)/\tau]}}{\sum\limits_{y'\in \mathcal{Y}}e^{\mathrm{E}[r(y', Y)/\tau]}}
\end{displaymath}
From Lemma~\ref{lem:bound:er}, 
\begin{displaymath}
\frac{1}{1 + (|\mathcal{Y}| - 1)e^{R/\tau}} \leq Q(z|x) \leq \frac{1}{1 + (|\mathcal{Y}| - 1)e^{-R/\tau}}
\end{displaymath}
If $z\neq y^*$, 
\begin{displaymath}
\begin{array}{rcl}
Q(z|x) & \leq & \frac{e^{\mathrm{E}[r(z, Y)/\tau]}}{e^{\mathrm{E}[r(y^*, Y)/\tau]} + |\mathcal{Y}| - 1} \leq \left(e^{\mathrm{E}[r(y^*, Y)/\tau] - \mathrm{E}[r(z, Y)/\tau]} + (|\mathcal{Y}| - 1) e^{-R/\tau} \right)^{-1} \\
 & \leq & \left( e^{P(y^*|x)(r(y^*, y^*)/\tau -r(z, y^*)/\tau)} - e^{-cbR/\tau} + (|\mathcal{Y}| - 1) e^{-R/\tau}\right)^{-1} \\
 & \leq & \left( e^{(1-e^{-cb})\gamma R/\tau -e^{-cb}R/\tau} + (|\mathcal{Y}| - 1) e^{-R/\tau} \right)^{-1} \\
 & = & \frac{1}{e^{\alpha R/\tau} + (|\mathcal{Y}| - 1) e^{-R/\tau}}
\end{array}
\end{displaymath}
If $z = y^*$,
\begin{displaymath}
Q(z|x) = \left( 1 + \sum\limits_{y\neq y^*} e^{\mathrm{E}[r(y, Y)/\tau] - \mathrm{E}[r(y^*, Y)/\tau]}\right)^{-1} \leq \frac{1}{1 + (|\mathcal{Y}| - 1) e^{-\alpha R/\tau}}
\end{displaymath}
\end{proof}
Now, we can prove Theorem~\ref{thm:approx2} with the above lemmas.
\paragraph{Proof of Theorem~\ref{thm:approx2}}
\begin{proof}
\begin{displaymath}
\begin{array}{rl}
\mathrm{KL}(Q(\cdot|X=x)\|Q'(\cdot|X=x)) & = \sum\limits_{y\in \mathcal{Y}} Q(y|x) \log \frac{Q(y|x)}{Q'(y|x)} = Q(y^*|x) \log \frac{Q(y^*|x)}{Q'(y^*|x)} + \sum\limits_{y \neq y^*} Q(y|x) \log \frac{Q(y|x)}{Q'(y|x)} \\
 & \leq \left(1 + (|\mathcal{Y}| - 1) e^{-R/\tau} \right)^{-1} \log \left( \frac{1}{1 - e^{-cb}} \frac{1 + (|\mathcal{Y}| - 1) e^{-\gamma R/\tau}}{1 + (|\mathcal{Y}| - 1) e^{-R/\tau}} \right) \\
 & + \frac{|\mathcal{Y}| - 1}{e^{\alpha R/\tau} + (|\mathcal{Y}| - 1) e^{-R/\tau}} \log \frac{1 + (|\mathcal{Y}| - 1) e^{R/\tau}}{e^{\alpha R/\tau} + (|\mathcal{Y}| - 1) e^{-R/\tau}}
\end{array}
\end{displaymath}
\begin{displaymath}
\begin{array}{rl}
\lim\limits_{\tau \to 0} \mathrm{KL}(Q(\cdot|X=x)\|Q'(\cdot|X=x)) & \leq \log \frac{1}{1 - e^{-cb}} + \lim\limits_{\tau \to 0} \frac{|\mathcal{Y}| - 1}{e^{\alpha R/\tau}} \log (|\mathcal{Y}| - 1) e^{(1-\alpha) R/\tau} \\
 & = \log \frac{1}{1 - e^{-cb}} + \lim\limits_{\tau \to 0} \frac{|\mathcal{Y}| - 1}{e^{\alpha R/\tau}} \left( \log (|\mathcal{Y}| - 1) + (1-\alpha) R/\tau \right) \\
 & = \log \frac{1}{1 - e^{-cb}} = \log 1 + \frac{e^{-cb}}{1 - e^{-cb}} \\
 & \leq \frac{e^{-cb}}{1 - e^{-cb}} = \frac{1}{1 + e^{cb}}
\end{array}
\end{displaymath}
\end{proof}

\begin{figure}[t]
\centering
\begin{minipage}{0.99\textwidth}
\centering
\subfloat[Bayes decision rule]{
\includegraphics[width=0.33\textwidth]{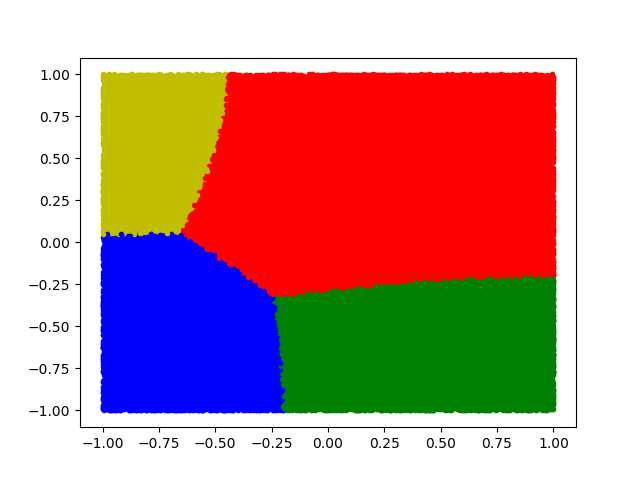}
\label{subfig:boundary:bayes}
}
\subfloat[ML]{
\includegraphics[width=0.33\textwidth]{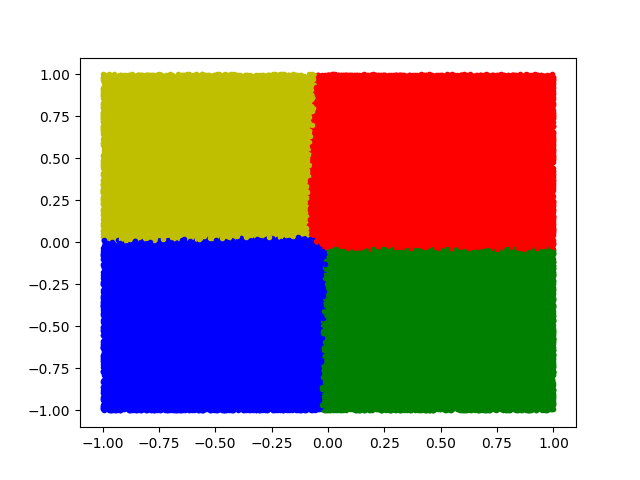}
\label{subfig:boundary:ml}
}
\\
\subfloat[RAML ($\tau=0.5$)]{
\includegraphics[width=0.33\textwidth]{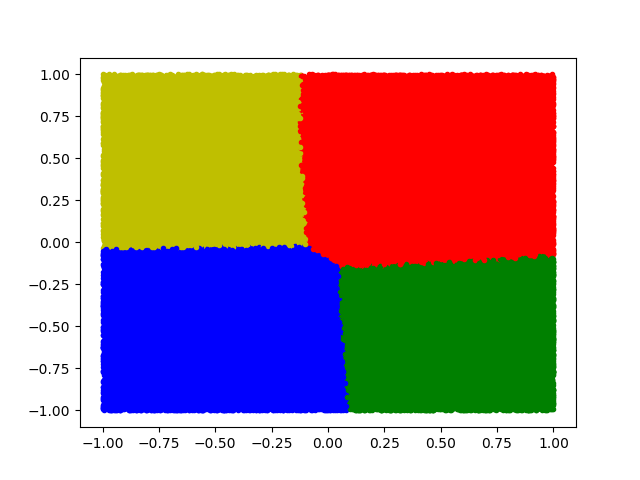}
\label{subfig:boundary:raml:begin}
}
\subfloat[RAML (best)]{
\includegraphics[width=0.33\textwidth]{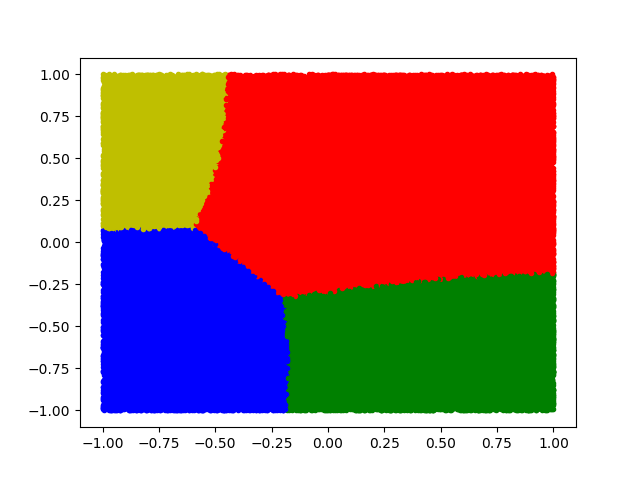}
\label{subfig:boundary:raml:best}
}
\subfloat[RAML ($\tau=10000$)]{
\includegraphics[width=0.33\textwidth]{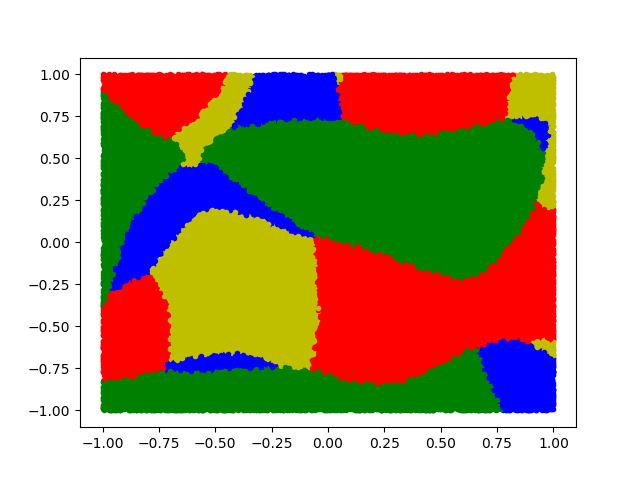}
\label{subfig:boundary:raml:end}
}
\\
\subfloat[SQDML ($\tau=0.5$)]{
\includegraphics[width=0.33\textwidth]{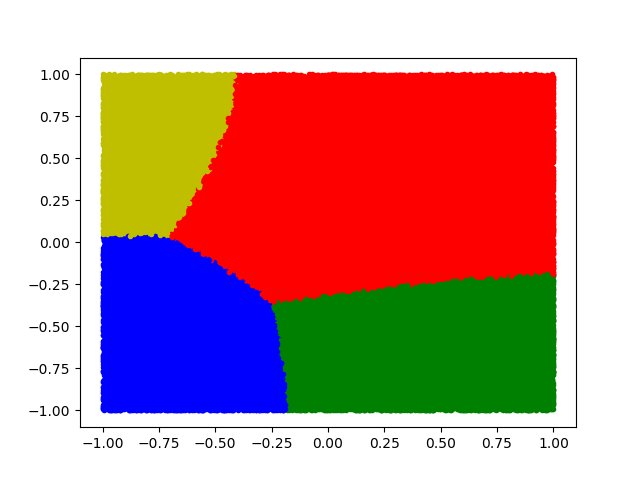}
\label{subfig:boundary:sqdml:begin}
}
\subfloat[SQDML (best)]{
\includegraphics[width=0.33\textwidth]{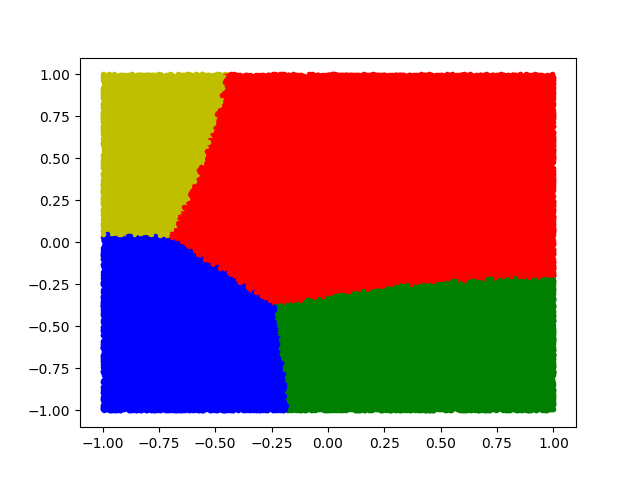}
\label{subfig:boundary:sqdml:best}
}
\subfloat[SQDML ($\tau=10000$)]{
\includegraphics[width=0.33\textwidth]{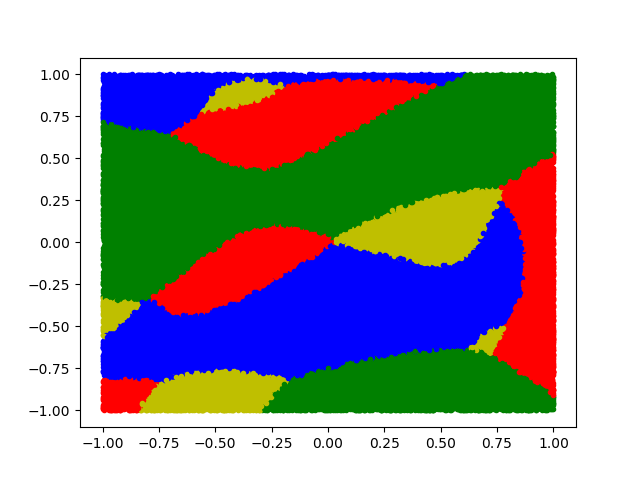}
\label{subfig:boundary:sqdml:end}
}
\end{minipage}
\caption{Decision boundaries of different models, together with the Bayes decision rule in (a).
(b) display the decision boundary of ML. 
(c), (d), (e) are the decision boundaries of RAML with $\tau=0.5$, $\tau=10000$ and the one achieves the best performance $\tau=2.4$.
(f), (g), (h) are the corresponding boundaries of SQDML. The best performance is achieved with $\tau=1.1$}
\label{fig:boundary}
\end{figure}

\section{Cost-sensitive Multi-class Classification}\label{appendix:synthetic}
To better illustrate the properties of ML, RAML and SQDML, we display the decision boundary of the learned models in Figure~\ref{fig:boundary}.
Figure~\ref{subfig:boundary:bayes} gives the boundary of the Bayes decision rule, and Figure~\ref{subfig:boundary:ml} is the boundary of ML. We can see that, as expected, ML gives ``unbiased'' boundary because it does not incorporating any information of the task-specific reward.

Figure~\ref{subfig:boundary:raml:begin} and~\ref{subfig:boundary:sqdml:begin} are the decision boundaries of RAML and SQDML with $\tau=0.5$. We can see that, even with small $\tau$, SQDML is able to achieve good decision boundary similar to that of the Bayes decision rule, while the boundary of RAML is similar to that of ML. 
This might suggest that RAML, as an approximation of SQDML, might ``degenerates'' to ML due to approximation error.

Figure~\ref{subfig:boundary:raml:best} and ~\ref{subfig:boundary:sqdml:best} provide the boundary of RAML and SQDML  that achieve the best performance ($\tau=2.4$ for RAML and $\tau=1.1$ for SQDML). RAML is able to produce surprisingly good decisions with proper $\tau$, which is comparable with SQDML. 

Figure~\ref{subfig:boundary:raml:end} and~\ref{subfig:boundary:sqdml:end} are the decision boundaries of RAML and SQDML with large $\tau=10000$. We can see that, consistently matching our analysis, neither RAML or SQDML can learn reasonable prediction function. The reason is, as we discussed in \S\ref{subsec:analysis}, when $\tau$ becomes larger, the softmax Q-distribution becomes closer to the uniform distribution, providing less information of prediction, even though the approximation error tends to be zero.

\section{Image Captioning with Multiple References}\label{appendix:image_caption}

\noindent \textbf{Encoder} Following~\citep{yang2016review}, we adopt the widely-used CNN architecture VGGNet~\citep{simonyan2014very} as the image encoder. Specifically, we use the last fully connected layer fc7 as image representation (4096-dimensional), which is further fed into a decoder to generate captions. We use a pre-trained VGGNet model\footnote{\label{vggnet}Downloaded from \url{https://github.com/kimiyoung/review_net}}, and keep it fixed during the training of decoder.

\noindent \textbf{Decoder} 
We use an LSTM network as the decoder to predicate a sequence of target words: $\{y_1, y_2, \ldots, y_T \}$. Formally, the decoder uses its internal hidden state $\mathbf{s}_t$ at each time step to track the generation process, defined as
\begin{equation*}
  \mathbf{s}_t = f_\textrm{LSTM}(\mathbf{y}_{t-1}, \mathbf{s}_{t-1}),
\end{equation*}
where $\mathbf{y}_{t-1}$ is the embedding of the previous word $y_{t-1}$. We initialize the memory cell of the decoder by passing the fixed-length image representation $\mathbf{x}$ through an affine transformation layer. The probability of the target word $y_t$ is then given by
\begin{equation*}
  p(y_t|y_{<t}, \mathbf{x}) = \textrm{softmax} (\mathbf{W}_s \mathbf{s}_{t}).
\end{equation*}

\noindent \textbf{Training by $N$-gram Replacement} 
As discussed in \S\ref{subsubsec:exp:image_caption}, we approximate the exponentially large hypotheses space $\mathcal{Y}$ using a subset of sampled hypotheses $\mathcal{S}$.
Formally, the training set $\mathcal{D}$ consists of pairs of images $x$ and multiple references $\{y^*\}$, i.e., $\mathcal{D}=\{ \langle x, \{y^*\} \rangle \}$. 
For RAML, we split a single training instance $\langle x, \{y^*\} \rangle$ into multiple ones by pairing $x$ with each $y^*$, i.e., $\{ \langle x, y^* \rangle \}$.
And for each instance $\langle x, y^* \rangle$ we maximize
\begin{equation*}
\begin{split}
\sum_{y \in \mathcal{Y}} q(y|y^*; \tau) \log P_{\theta}(y|x) &= \sum_{y \in \mathcal{Y}} \frac{\exp \big( r(y, y^*) / \tau \big)}{\sum_{y' \in \mathcal{Y}} \exp \big( r(y', y^*) / \tau \big)} \log P_{\theta}(y|x) \\ 
&\approx \sum_{y \in \mathcal{S}} \frac{\exp \big( r(y, y^*) / \tau \big) }{\sum_{y' \in \mathcal{S}} \exp \big( r(y', y^*) / \tau \big)} \log P_{\theta}(y|x).
\end{split}
\end{equation*}
For SQDML, for each training example $\langle x, \{y^*\} \rangle$ we directly maximize the weighted log-likelihood w.r.t.~the softmax-Q distribution
\begin{equation*}
\begin{split}
\sum_{y \in \mathcal{Y}} Q(y|x; \tau) \log P_{\theta}(y|x) &= \sum_{y \in \mathcal{Y}} \frac{\exp \big( \sum_{y^*} \frac{1}{|\{ y^* \}|} r(y, y^*) / \tau \big) }{\sum_{y' \in \mathcal{Y}} \exp \big( \sum_{y^*} \frac{1}{|\{ y^* \}|} r(y', y^*) / \tau \big) } \log P_{\theta}(y|x) \\ 
&\approx \sum_{y \in \mathcal{S}} \frac{\exp \big( \sum_{y^*} \frac{1}{|\{ y^* \}|} r(y, y^*) / \tau \big) }{\sum_{y' \in \mathcal{S}} \exp \big( \sum_{y^*} \frac{1}{|\{ y^* \}|} r(y', y^*) / \tau \big) } \log P_{\theta}(y|x).
\end{split}
\end{equation*}
In our experiments, the size of the sampled targets $\mathcal{S}$ is 500 for SQDML and 100 for RAML\footnote{Since each image has around five reference captions, this ensures that the number of sampled candidate $y$'s for each training example is roughly the same for SQDML and RAML.}.
For the sake of efficiency, at each iteration of stochastic gradient descent, we only use $k$ randomly-selected hypotheses from $\mathcal{S}$ to perform gradient update. $k$ is 50 for SQDML and 10 for RAML.

\noindent \textbf{Configuration} 
We use the sentence-level BLEU with NIST geometric smoothing as the reward.
We replace word types whose frequency is less than five with a special \texttt{<unk>} token. The resulting vocabulary size is 10,102.
The dimensionality of word embeddings and LSTM hidden sates is 256 and 512, respectively.
For decoding, we use beam search with a beam size of 5.
We use a batch size of 10 for the ML baseline and a larger size of 100 for SQDML and RAML for the sake of efficiency.

\section{Experiments on Structured Prediction}\label{appendix:SP}
\subsection{Dataset Statistics}
\label{appendix:corpus}

We present statistics of the datasets we used in Table~\ref{tab:corpus}.

\begin{table}[H]
\centering
\begin{tabular}{l|c|c|c|c}
\textbf{Dataset} & & \textbf{CoNLL2003} & \textbf{PTB} & \textbf{IWSLT2014} \\
\hline\hline
\multirow{2}{*}{\textsc{Train}} & \#\textsf{Sent} & 14,987 & 39,832 & 153,326 \\
 & \#\textsf{Token} & 204,567 & 843,029 & 2,687,420 / 2,836,554 \\
\hline
\multirow{2}{*}{\textsc{Dev.}} & \#\textsf{Sent} & 3,466 & 1,700 & 6,969 \\
 & \#\textsf{Token} & 51,578 & 35,508 & 122,327 / 129,091 \\
\hline
\multirow{2}{*}{\textsc{Test}} & \#\textsf{Sent} & 3,684 & 2,416 & 6,750 \\
 & \#\textsf{Token} & 46,666 & 49,892 & 125,738 / 131,141 \\
\end{tabular}
\caption{Dataset statistics. \#\textsf{Sent} and \#\textsf{Token} refer to the number of sentences and tokens in each data set, respectively 
(for IWSLT, they refer to the number of sentence pairs and tokens of source/target languages).}
\label{tab:corpus}
\end{table}

\subsection{Models for Structured Prediction}\label{appendix:model}
\subsubsection{Log-linear Model}\label{appendix:log-linear}
A commonly used log-linear model defines a family of conditional probability $P_\theta(y|x)$ over $\mathcal{Y}$ with the following form:
\begin{equation}\label{eq:log-linear}
P_\theta(y|x) = \frac{\Phi(y, x; \theta)}{\sum\limits_{y' \in \mathcal{Y}} \Phi(y', x; \theta)} = \frac{\exp (\theta^{T} \phi(y, x))}{\sum\limits_{y' \in \mathcal{Y}} \exp (\theta^{T} \phi(y', x))}
\end{equation}
where $\phi(y, x)$ are the feature functions, $\theta$ are parameters of the model and $\Phi(y,x; \theta)$ captures the dependency between the input and output variables.
We define the \emph{partition function}: $Z(x; \theta) = \sum\limits_{y' \in \mathcal{Y}} \exp (\theta^{T} \phi(y', x))$.
Then, the conditional probability in \eqref{eq:log-linear} can be written as:
\begin{displaymath}
P_\theta(y|x) = \frac{\exp (\theta^{T} \phi(y, x))}{Z(x; \theta)}
\end{displaymath}
Now, the objective of RAML for one training instance $(x, y)$ is:
\begin{equation}\label{eq:obj:theta}
\mathcal{L}(\theta) = -\sum\limits_{y' \in \mathcal{Y}} q(y'|y; \tau) \log P_{\theta}(y'|x) = - \theta^{T} \left\{ \sum\limits_{y' \in \mathcal{Y}} q(y'|y; \tau) \phi(y', x) \right \} + \log Z(x; \theta)
\end{equation}
and the gradient is:
\begin{equation}\label{eq:gradient}
\begin{array}{rcl}
\frac{\partial \mathcal{L}(\theta)}{\partial \theta} & = & - \sum\limits_{y' \in \mathcal{Y}} q(y'|y; \tau) \phi(y', x) + \frac{\partial \log Z(x; \theta)}{\partial \theta} \\
 & = & - \sum\limits_{y' \in \mathcal{Y}} q(y'|y; \tau) \phi(y', x) + \sum\limits_{y' \in \mathcal{Y}} P_\theta(y'|x) \phi(y', x) \\
 & = & \sum\limits_{y' \in \mathcal{Y}} \left( P_\theta(y'|x) - q(y'|y; \tau) \right) \phi(y', x)
\end{array}
\end{equation}
To optimize $\mathcal{L}(\theta)$, we need to efficiently compute the objective and its gradient.
In the next two sections, we see that when the feature $\phi(y, x)$ and the reward $r(y, y^*)$ follow some certain factorizations, efficient dynamic programming algorithms exist.

\subsubsection{Sequence CRF}\label{appendix:crf}
In sequence CRF, $\Phi$ usually factorizes as sum of \emph{potential functions} defined on pairs of successive labels:
\begin{displaymath}
\Phi(y, x; \theta)  = \prod\limits_{i=1}^{L}\psi_{i}(y_{i-1}, y_i, x; \theta)
\end{displaymath}
where $\psi_{i}(y_{i-1}, y_i, x; \theta) = \exp (\theta^{T} \phi_i(y_{i-1}, y_i, x))$.
When we use the token level label accuracy as reward, the reward function can be factorized as:
\begin{displaymath}
r(y, y^*) = \sum\limits_{i=1}^{L} \mathbb{I}(y_i = y^*_i) 
\end{displaymath}
where $y_i$ is the label of the $i$th token (word).
Then, the objective and gradient in \eqref{eq:obj:theta} and \eqref{eq:gradient} can be computed by using the forward-backward algorithm~\citep{wallach2004conditional}.

\subsubsection{Edge-factorized Tree-structure Model}\label{appendix:tree}
In dependency parsing, $y$ represents a generic dependency tree which consists of directed edges between heads and their dependents (modifiers).
The edge-factorized model factorizes potential function $\Phi$ into the set of edges:
\begin{displaymath}
\Phi(y, x; \theta)  = \prod\limits_{e\in y}\psi_{e}(e, x; \theta)
\end{displaymath}
where $e$ is an edge belonging to the tree $y$. $\psi_{e}(e; \theta) = \exp (\theta^{T} \phi_e(e, x))$.
The reward of UAS can be factorized as:
\begin{displaymath}
r(y, y^*) = \sum\limits_{i=1}^{L} \mathbb{I}(y_i = y^*_i)
\end{displaymath}
where $y_i$ is the head of the $i$th word in the sentence $x$.
Then, we have:
\begin{equation}\label{eq:inside:outside:p}
\begin{array}{rcl}
\sum\limits_{y \in \mathcal{Y}} P_\theta(y|x) \phi(y, x) & = & \sum\limits_{y \in \mathcal{Y}} \sum\limits_{e \in y} P_\theta(y|x) \phi_e(e, x) \\
 & = & \sum\limits_{e \in \mathcal{E}} \phi_e(e, x) \left\{ \sum\limits_{y \in \mathcal{Y}(e)} P_\theta(y|x) \right\}
\end{array}
\end{equation}
where $\mathcal{E}$ is the set of all possible edges for sentence $x$ and $\mathcal{Y}(e) = \{ y \in \mathcal{Y}: e \in y\}$. 
With similar derivation, we have 
\begin{equation}\label{eq:inside:outside:q}
\sum\limits_{y' \in \mathcal{Y}} q(y'|y; \tau) \phi(y', x) = \sum\limits_{e \in \mathcal{E}} \phi_e(e, x) \left\{ \sum\limits_{y' \in \mathcal{Y}(e)} q(y'|y) \right\}
\end{equation}
Both \eqref{eq:inside:outside:p} and \eqref{eq:inside:outside:q} can be computed by using the inside-outside algorithm~\citep{paskin2001cubic,ma2015probabilistic}

\subsubsection{Attentional Neural Machine Translation Model}\label{appendix:nmt}

\subsubsubsection{\textbf{Model Overview}}
\label{appendix:nmt_model}

We apply a neural encoder-decoder model with attention and input feeding~\citep{luong15nmt}. 
Given a source sentence $x$ of $N$ words $\{ x_i \}_{i=1}^N$, the conditional probability of the target sentence $y = \{ y_i \}_{i=1}^T$, $p(y|x)$, is factorized as $p(y|x)=\prod_{t=1}^T p(y_t|y_{<t}, x)$.
The probability is computed using a bi-directional LSTM encoder and an LSTM decoder:

\noindent \textbf{Encoder}
Let $\mathbf{x}_i$ denote the embedding of the $i$-th source word $x_i$.
We use two unidirectional LSTMs to process $x$ in forward and backward order, and get the sequence of hidden states $\{\vec{\mathbf{h}}_i\}_{i=1}^{N}$ and $\{\cev{\mathbf{h}}_i\}_{i=1}^{N}$ in the two directions:
\begin{equation*}
\begin{split}
  \vec{\mathbf{h}}_i &= f^{\to}_\textrm{LSTM}(\mathbf{x}_i, \vec{\mathbf{h}}_{i-1}) \\
  \cev{\mathbf{h}}_i &= f^{\gets}_\textrm{LSTM}(\mathbf{x}_{i}, \cev{\mathbf{h}}_{i+1}),
\end{split}
\end{equation*}
where $f^{\to}_\textrm{LSTM}$ and $f^{\gets}_\textrm{LSTM}$ are standard LSTM update functions as in~\citet{hochreiter97lstm}.
The representation of the $i$-th word, $\mathbf{h}_i$, is then given by concatenating $\vec{\mathbf{h}}_i$ and $\cev{\mathbf{h}}_i$.

\noindent \textbf{Decoder} An LSTM is used as the decoder to predict a target word $y_t$ at each time step $t$.
Formally, the decoder maintains a hidden state $\mathbf{s}_t$ to track the translation process, defined as
\begin{equation*}
  \mathbf{s}_t = f_\textrm{LSTM}([\mathbf{y}_{t-1}:\tilde{\mathbf{s}}_{t-1}], \mathbf{s}_{t-1}),
\end{equation*}
where $[:]$ denotes vector concatenation, and $\mathbf{y}_{t-1}$ is the embedding of the previous target word. We initialize the first memory cell of the decoder using the last hidden states of the two encoding LSTMs: $\textbf{cell}_0 = \mathbf{W}[\vec{\mathbf{h}}_N: \cev{\mathbf{h}}_N] + \mathbf{b}$. And the first hidden state of the decoder is initialized as $\mathbf{s}_0 = \tanh( \textbf{cell}_0 )$.
The attentional vector $\tilde{\mathbf{s}}_{t}$ is computed as
\begin{equation*}
\tilde{\mathbf{s}}_{t} = \tanh ( \mathbf{W}_c [\mathbf{s}_t: \mathbf{c}_t] ),
\end{equation*}
where the context vector $\mathbf{c}_t$ is a weighted sum of the source encodings $\{  \mathbf{h}_i \}$ via attention~\citep{bahdanau2014neural}.
The probability of the target word $y_t$ is then given by
\begin{equation*}
  p(y_t|y_{<t}, x) = \textrm{softmax} (\mathbf{W}_s \tilde{\mathbf{s}}_{t}).
\end{equation*}

\begin{table}[t]
\centering
\begin{tabular}{c|c}
Method & \textsc{Bleu} \\
\hline\hline
ML Baseline & 27.66 \\ \hline
$\tau=0.10$ & 28.22 \\
$\tau=0.20$ & 28.22 \\
$\tau=0.30$ & 28.22 \\
$\tau=0.40$ & 28.31 \\
$\tau=0.50$ & 28.28 \\
$\tau=0.60$ & 28.23 \\
$\tau=0.70$ & \textbf{28.61} \\
$\tau=0.80$ & 28.30 \\
$\tau=0.90$ & 28.40 \\
$\tau=1.00$ & 28.42 \\ \hline
\textsc{$N$-Gram} & \textbf{28.77} \\
\end{tabular}
\caption{Corpus-level BLEU score of RAML using importance sampling}\label{tab:raml_nmt_impt_sample}
\end{table}

\begin{table}[t]
\centering
\begin{tabular}{c|c}
Method & \textsc{Bleu} \\
\hline\hline
ML Baseline & 27.66 \\ \hline
$\tau=0.60$ & 27.96 \\
$\tau=0.65$ & 27.94 \\
$\tau=0.70$ & 28.18 \\
$\tau=0.75$ & 27.96 \\
$\tau=0.80$ & 27.93 \\
$\tau=0.85$ & 27.97 \\
$\tau=0.90$ & \textbf{28.39} \\
$\tau=0.95$ & 28.30 \\
$\tau=1.00$ & 28.32 \\
$\tau=1.05$ & 27.92 \\ \hline
\textsc{Impt. Sample} & 28.61 \\
\textsc{$N$-Gram} & \textbf{28.77} \\
\end{tabular}
\caption{Corpus-level BLEU score of RAML using negative Hamming distance as the  reward function}\label{tab:raml_nmt_hamming_dist}
\end{table}

\noindent \textbf{Configuration} 
We use the same pre-processed dataset as in~\cite{wiseman16beamsearch}.
The vocabulary size of the German and English data is 32,008 and 22,821 words, resp.
Similar as~\citet{bahdanau2016actor}, the dimensionality of word embeddings and LSTM hidden states is 256. All neural network parameters are uniformly initialized between $[-0.1, +0.1]$. We use Adam optimizer. We validate the perplexity of the development set after every epoch, and halve the learning rate if the validation performance drops. 
We use the sentence level BLEU with NIST geometric smoothing as the training reward, and use the official \texttt{multi-bleu.perl} script for evaluating corpus-level BLEU.
The beam size for decoding is 5.
We use a batch size of 64 for ML baseline and a larger size of 100 for RAML for the sake of efficiency.

\subsubsubsection{\textbf{Approximating the Learning Objective using Importance Sampling}}

As suggested in~\cite{norouzi2016reward}, with BLEU as the training reward, the objective function~\eqref{eq:raml} of RAML could be optimized using importance sampling.
To verify this, we conducted experiment using importance sampling.
Since we cannot directly sample from the exponentiated payoff distribution parameterized by BLEU score (i.e., $q_\textrm{BLEU}(y|y^*, \tau)$), we use the payoff distribution with negative Hamming distance (i.e., $q_\textrm{hm}(y|y^*, \tau)$) as the proposal distribution, and sample from $q_\textrm{hm}(y|y^*, \tau)$ instead. Specifically, at each training iteration, we approximate~\eqref{eq:raml} by
\small
\begin{align*}
  \sum_{y \in \mathcal{Y}} q(y|y^*; \tau) \log P_{\theta}(y|x_i) 
  &\approx \sum_{y \sim q_\textrm{hm}(\cdot|y^*, \tau)} \frac{\tilde{q}_\textrm{BLEU}(y|y^*; \tau) / \tilde{q}_\textrm{hm}(y|y^*; \tau) }{ \sum_{y' \sim q_\textrm{hm}(\cdot|y^*, \tau)} \tilde{q}_\textrm{BLEU}(y'|y^*; \tau) / \tilde{q}_\textrm{hm}(y'|y^*; \tau) }  \log P_{\theta}(y|x_i) \\
  &= \sum_{y \sim q_\textrm{hm}} \frac{ \exp \{ \textrm{BLEU}(y, y^*) / \tau \} / \exp \{ \textrm{hm}(y, y^*) / \tau \} }{ \sum_{y' \sim q_\textrm{hm}} \exp \{ \textrm{BLEU}(y', y^*) / \tau \} / \exp \{ \textrm{hm}(y', y^*) / \tau \} }  \log P_{\theta}(y|x_i),
\end{align*}
\normalsize
where the $\tilde{q}(\cdot)$'s denote the payoff distributions without normalization terms. $\textrm{BLEU}(\cdot)$ and $\textrm{hm}(\cdot)$ denote sentence-level BLEU score and negative Hamming distance, respectively.
We use a sample size of 10.\footnote{We also tried larger sample size but did not observe significant gains.}

To draw a sample $y$ from $q_\textrm{hm}(y|y^*, \tau)$, we follow~\citet{norouzi2016reward} and apply stratified sampling.
We first sample a distance $d \in [0, 1, 2, \ldots, |y^*|-1, |y^*|]$, and then sample a sentence $y$ with Hamming distance $d$ from $y^*$.
Let $c(d, L)$ denote the number of $y$'s with length $L$ and an Hamming distance of $d$ from the ground-truth $y^*$, $q_\textrm{hm}(y|y^*, \tau)$ is then defined as:
\begin{align*}
  q_\textrm{hm}(y|y^*, \tau) = \frac{\exp \{ \textrm{hm}(y, y^*) / \tau \}}{ \sum_{d = 0}^{|y^*|} c(d, |y^*|) \cdot \exp \{ -d / \tau \} }. 
\end{align*}
Similar as in~\citet{norouzi2016reward}, $c(d, L)$ is approximated by considering $d$ substitutions of words from $y^*$:
\begin{align*}
  c(d, L) = \binom{L}{d} (V - 1)^d,
\end{align*}
where $V$ is the vocabulary size.\footnote{Through correspondence with the authors, we scale $\tau$ by $\frac{1}{1 + \log(V - 1)}$}

Table~\ref{tab:raml_nmt_impt_sample} lists the performance of importance sampling with different temperatures.
The best model (under $\tau=0.8$) is comparable with the one achieved by $n$-gram replacement.
However, $n$-gram replacement is much simpler to implement, and importance sampling requires extra computation of the proposal distribution and associated importance weights, which would be less computationally efficient.
In our experiments, our highly optimized RAML model achieves a training speed of 18,000 words/sec for importance sampling and 21,000 words/sec for $n$-gram replacement.

\subsubsubsection{\textbf{Extra Experiments using Negative Hamming Distance as Training Reward}}
\label{appendix:nmt_exp}

For the sake of completeness, we also experimented using the negative Hamming distance as the reward function for RAML, as proposed in~\citet{norouzi2016reward}. Results are listed in Table~\ref{tab:raml_nmt_hamming_dist}. The best model gets a corpus-level BLEU score of 28.39, which is worse than the best results achieved by optimizing directly towards BLEU scores (c.f.~Table~\ref{tab:raml_nmt}).

\end{document}